\documentclass{article}
\PassOptionsToPackage{numbers, compress}{natbib}

\usepackage[final]{neurips_2021}

\usepackage[utf8]{inputenc} %
\usepackage[T1]{fontenc}    %
\usepackage{hyperref}       %
\usepackage{url}            %
\usepackage{booktabs}       %
\usepackage{amsfonts}       %
\usepackage{nicefrac}       %
\usepackage{microtype}      %
\usepackage[table,dvipsnames]{xcolor}         %

\usepackage{graphicx}
\usepackage{amsmath,amssymb} %
\usepackage{amsthm}
\usepackage{color}
\usepackage{array}
\usepackage{subcaption}
\usepackage{xspace}
\usepackage{footnote}
\usepackage{wrapfig}
\usepackage[ruled,vlined,linesnumbered]{algorithm2e} 
\usepackage{wrapfig}
\usepackage{tabularx}
\usepackage{multirow}
\usepackage{colortbl}
\usepackage{cleveref}

\newcommand{\fexp}{f^{\text{teach}}}
\newcommand{\piexp}{\pi^{\text{teach}}}

\newcommand{\expert}{teacher\xspace}

\newcommand{\thor}{\textsc{AI2-THOR}\xspace}
\newcommand{\robothor}{\textsc{RoboTHOR}\xspace}
\newcommand{\habitat}{\textsc{AIHabitat}\xspace}

\newcommand{\feasiblepi}{\Pi^{\text{feas.}}}
\usepackage{commands}
\input{commands.tex}

\newcommand{\rebuttal}[1]{{\color{black}#1}}

\definecolor{ColorPNav}{HTML}{F5F5F5}
\definecolor{ColorFurnMove}{HTML}{FFF8D7}
\definecolor{ColorFootball}{HTML}{EFFFE8}

\newcommand{\pnav}{\mbox{\sc{{PointNav}}}\xspace}
\newcommand{\onav}{\mbox{\sc{{ObjectNav}}}\xspace}
\newcommand{\cnav}{\mbox{\sc{{CoopNav}}}\xspace}

\newtheorem{prop}{Proposition}

\theoremstyle{definition}
\newtheorem{example}{Example}

\definecolor{ColorPNav}{HTML}{F5F5F5}
\definecolor{ColorFurnMove}{HTML}{FFF8D7}
\definecolor{ColorFootball}{HTML}{EFFFE8}

\title{Bridging the Imitation Gap \\by Adaptive Insubordination}

\author{%
  Luca Weihs\thanks{denotes equal contribution by LW and UJ; $^{\dagger}$work done, in part, as an intern at Allen Institute for AI}\hspace{1.5mm}$^{,1}$,\hspace{1mm}
  Unnat Jain$^{*,2,\dagger}$,\hspace{1mm}
  Iou-Jen Liu$^{2}$,\hspace{1mm}
  Jordi Salvador$^{1}$,\\
  \textbf{Svetlana Lazebnik$^{2}$,
  Aniruddha Kembhavi$^{1}$,\hspace{1mm}
  Alexander Schwing$^{2}$}\\
  $^{1}$Allen Institute for AI,\hspace{2mm}$^{2}$University of Illinois at Urbana-Champaign \\
  \small{\texttt{\{lucaw, jordis, anik\}@allenai.org}}\\ \small{\texttt{\{uj2, iliu3, slazebni, aschwing\}@illinois.edu}} \\
  {\color{magenta}\url{https://unnat.github.io/advisor/}}
}

\begin{document}

\maketitle

\begin{abstract}
In practice, imitation learning is preferred over pure reinforcement learning whenever it is possible to design a teaching agent to provide expert supervision. However, we show that when the teaching agent makes decisions with access to privileged information that is unavailable to the student, this information is marginalized during imitation learning, resulting in an ``imitation gap'' and, potentially, poor results. Prior work bridges this gap via a progression from imitation learning to reinforcement learning. While often successful, gradual progression fails for tasks that require frequent switches between exploration and memorization. To better address these tasks and alleviate the imitation gap we propose `Adaptive Insubordination' (ADVISOR). ADVISOR dynamically weights imitation and reward-based reinforcement learning losses during training, enabling on-the-fly switching between imitation and exploration. On a suite of challenging tasks set within gridworlds, multi-agent particle environments, and high-fidelity 3D simulators, we show that on-the-fly switching with ADVISOR outperforms pure imitation, pure reinforcement learning, as well as their sequential and parallel combinations.
\end{abstract}

\section{Introduction}\label{sec:introduction}
Imitation learning (IL) can be remarkably successful in settings where reinforcement learning (RL) struggles. For instance, IL has been shown to succeed in complex tasks with sparse rewards \citep{chevalier2018babyai,DeepMimicPengEtAl2018,NairMcGrewEtAl2018}, and when the observations are high-dimensional, \eg, in visual 3D environments \citep{ai2thor,habitat19iccv}. %
To succeed, IL provides the agent with consistent expert supervision at every timestep, making it less reliant on the agent randomly attaining success.
To obtain this expert supervision, it is often convenient to use ``privileged information,'' \ie, information that is unavailable to the student at inference time. This privileged information takes many forms in practice. For instance, in navigational tasks, experts are frequently designed using shortest path algorithms which access  the environment's connectivity graph~\citep[\eg,][]{GuptaCVPR2017}. Other forms of  privilege include
semantic maps~\citep[\eg,][]{Shridhar2020ALFREDAB,DasCoRL2018},
the ability to see into ``the future'' via rollouts~\citep{SilverNature2016}, 
and ground-truth world layouts~\citep{chen2020learning}. 
The following example shows how this type of privileged information can result in IL dramatically failing.

\begin{example}[Poisoned Doors] \label{example:poisoned-doors}
 Suppose an agent is presented with $N\geq 3$ doors $d_1,\ldots,d_N$. As illustrated in~\figref{fig:babyai-tasks-poisoned-doors} (for $N=4$), opening $d_1$ requires entering an unknown fixed code of length $M$.  Successful code entry results in a reward of $1$, otherwise the reward is $0$. Since the code is unknown to the agent, it would need to learn the code by trial and error. %
 All other doors can be opened without a code. For some randomly chosen $2 \leq j \leq N$ (sampled each episode), the reward behind $d_j$ is $2$ but for all $i\in\{2, \ldots,N\}\setminus \{j\}$ the reward behind $d_i$ is $-2$. Without knowing $j$, the optimal policy is to always enter the correct code to open $d_1$ obtaining an expected reward of $1$. In contrast, if the expert is given the privileged knowledge of the door $d_j$ with reward 2, it will always choose to open this door immediately. It is easy to see that an agent
 without knowledge of $j$ attempting to imitate such an expert will learn to open a door among $d_2,\ldots, d_N$ uniformly at random obtaining an expected return of $-2\cdot(N-3)/(N-1)$. In this setting, training with reward-based RL after a `warm start' with IL is strictly worse than starting without it: the agent  needs to unlearn its policy and then, by chance, stumble into entering the correct code for door $d_1$, a practical impossibility when $M$ is  large.
\end{example}
\begin{wrapfigure}{r}{0.3\linewidth}
    \includegraphics[width=0.3\textwidth]{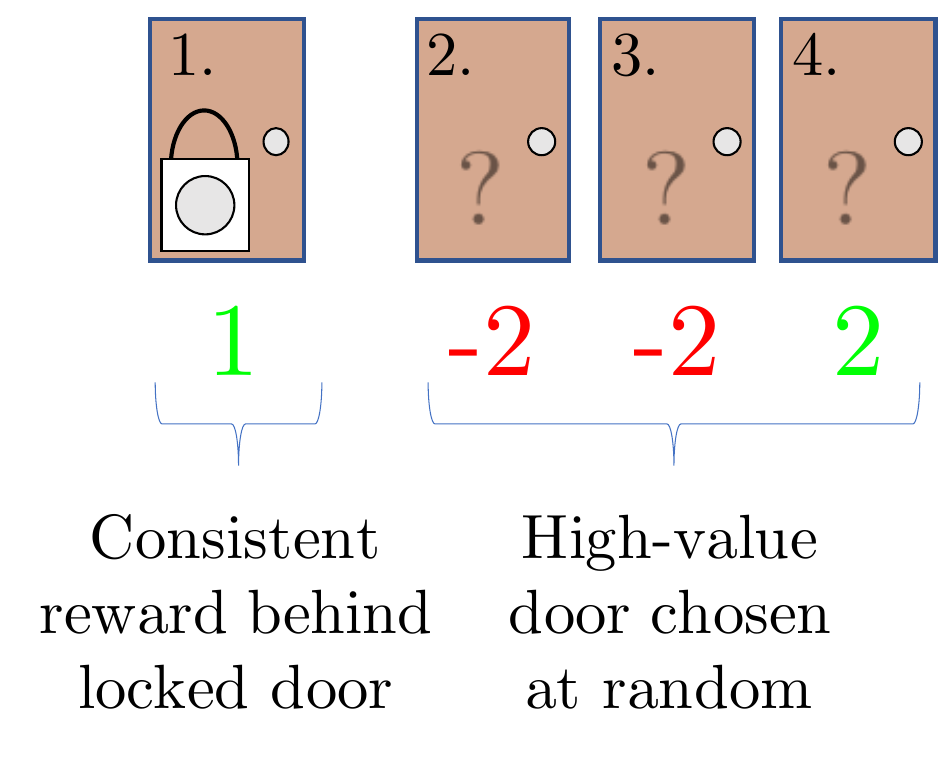}
    \caption{\textsc{PoisonedDoors}. }
    \vspace{-5mm}
    \label{fig:babyai-tasks-poisoned-doors}
\end{wrapfigure}

To characterize this imitation failure, we show that training a student to imitate a teacher who uses privileged information results in the student learning a policy which marginalizes out this privileged information.
This can result in a sub-optimal, even uniformly random, student policy over a large collection of states. We call the discrepancy between the teacher's and student's policy the \emph{imitation gap}. 
To bridge the imitation gap, we introduce 
\textbf{Ad}apti\textbf{v}e \textbf{I}n\textbf{s}ub\textbf{or}dination
(ADVISOR).
ADVISOR adaptively weights imitation and RL losses. Specifically, throughout training we use an auxiliary actor which judges whether the current observation is better treated using an IL or a RL loss. For this, the auxiliary actor attempts to reproduce the teacher's action using the observations of the student at every step. Intuitively, the weight corresponding to the IL loss is large when the auxiliary actor can reproduce the teacher's action with high confidence. %

We study the benefits of ADVISOR on thirteen tasks, including  `\textsc{PoisonedDoors}' from Ex.~\ref{example:poisoned-doors}, a 2D ``lighthouse'' gridworld, a suite of tasks set within the \minigrid environment \citep{chevalier2018babyai,gym_minigrid}, Cooperative Navigation with limited range (\cnav) in the multi-agent particle environment (MPE)~\cite{MordatchAAAI2018,Lowe2020}, and two navigational tasks set in 3D, high visual fidelity, simulators of real-world living environments (\textsc{PointNav} in \habitat \cite{habitat19iccv} and \textsc{ObjectNav} in \robothor \cite{ai2thor,DeitkeEtAl2020}). Our results show that, \\[0.5mm]
\noindent $\bullet$ the imitation gap's size directly impacts agent performance when using modern learning methods, \\[0.5mm]
\noindent $\bullet$ ADVISOR is \emph{performant} (outperforming IL and RL baselines), \emph{robust}, and \emph{sample efficient}, \\[1mm]
\noindent $\bullet$ ADVISOR can succeed even when expert supervision is partially corrupted, and \\[0.5mm]
\noindent $\bullet$ ADVISOR can be easily integrated in existing pipelines spanning diverse observations (grids and pixels), actions spaces (discrete and continuous), and algorithms (PPO and MADDPG).

\section{Related Work}
A series of methods~\citep[\eg,][]{MnihNature2015,HasseltARXIV2015,bellemare2016increasing,SchaulICLR2016} have made off-policy deep Q-learning  stable for complex environments like Atari Games. \rebuttal{Several high-performance (on-policy) policy-gradient methods for deep-RL have also been proposed~\citep{schulman2015trpo,MnihEtAlPMLR2016,levine2016end,wang2016acer,SilverNature2016}. 
For instance,} Trust Region Policy Optimization (TRPO)~\citep{schulman2015trpo} improves sample-efficiency by safely integrating larger gradient steps.
Proximal Policy Optimization (PPO)~\citep{schulman2017proximal} employs a clipped variant of TRPO's surrogate objective and 
is widely adopted in the deep RL community. We use PPO as a baseline in our experiments. 

As environments get more complex, navigating the search space with only deep RL and simple heuristic exploration (such as $\epsilon$-greedy) is increasingly difficult. Therefore,  %
methods that imitate expert (\ie, teacher) supervision %
were introduced. %
A popular approach to imitation learning (IL) is Behaviour Cloning (BC), \ie, use of a supervised classification loss between the policy of the student and expert agents~\citep{sammut1992learning,bain1995framework}. However, BC suffers from compounding errors. %
Namely, a single mistake of the student %
may lead to settings %
that have never been observed in training~\citep{ross2010efficient}. 
To address this, Data Aggregation (DAgger)~\citep{RossAISTATS2011} %
trains a sequence of student policies by querying the expert at states beyond those that would be reached by following only expert actions. 
IL is further enhanced by, \eg,  hierarchies~\citep{le2018hierarchical}, improving over the expert~\rebuttal{\citep{chang2015learning,BrysEtAl2015,JingEtAl2020}}, bypassing any intermediate reward function inference~\citep{ho2016generative}, and/or learning from experts that differ from the student~\citep{gupta2017learning,jiang2019value,gangwani2020state}.
Importantly, a sequential combination of IL and RL, \ie, pre-training a model on expert data before letting the agent interact with the environment, performs remarkably well. This strategy has been applied in a wide range of applications -- the game of Go~\citep{SilverNature2016}, robotic and motor skills~\citep{pomerleau1991efficient,kober2009policy,peters2008reinforcement,rajeswaran2018learning}, navigation
in visually realistic environments~\citep{GuptaCVPR2017,DasCVPR2018,jain2019CVPRTBONE,Jain_2021_ICCV}, and web \& language based tasks~\citep{he2016dual,visdial_rl,shi2017world,wang2018video}. 

More recent methods mix expert demonstrations with the agent's own rollouts instead of using a sequential combination of IL followed by RL. \rebuttal{\citet{ChemaliLazaric2015} perform policy iteration from expert and on-policy demonstrations.} DQfD~\citep{hester2018deep} initializes the replay buffer with expert episodes and adds rollouts of (a pretrained) agent. They weight experiences 
based on the previous temporal difference errors~\citep{SchaulICLR2016} and use a supervised loss to learn from the expert. For continuous action spaces,  DDPGfD~\citep{vecerik2017leveraging} analogously incorporates IL into DDPG~\citep{lillicrap2015continuous}.  POfD~\citep{kang2018policy} improves  by adding a demonstration-guided exploration term, \ie, the Jensen-Shannon divergence between the expert's and the learner's policy (estimated using occupancy measures). \rebuttal{THOR uses suboptimal experts to reshape rewards and then searches over a finite planning horizon \citep{SunEtAl2018}. \citet{ZhuWangEtAl2018} show that a combination of GAIL~\citep{ho2016generative} and RL can be highly effective for difficult manipulation tasks.}

\rebuttal{Critically, the above methods have, implicitly or explicitly, been designed under certain assumptions (\eg, the agent operates in an MDP) which imply the expert and student observe the same state.} Different from the above methods, we investigate the difference of privilege between the expert policy and the learned policy. Contrary to a sequential, static, or rule-based combination of supervised loss or divergence, we train an auxiliary actor to adaptively weight IL and RL losses. To the best of our knowledge, this hasn't been studied before. In concurrent work, \citet{WarringtonEtAl2020} address the imitation gap by jointly training their teacher and student to adapt the teacher to the student. For our applications of interest, this work is not applicable as our expert teachers are fixed.

\rebuttal{Our approach attempts to reduce the  imitation gap directly, assuming the information available to the learning agent is fixed. An indirect approach to reduce this gap is to enrich the information available to the agent or to improve the agent's memory of past experience. Several works have considered this direction in the context of autonomous driving~\citep{CodevillaEtAl2018ConditionalImitation,HawkeEtAl2020} and continuous control~\citep{GangwaniEtAl2019}. We expect that these methods can be beneficially combined with the method that we discuss next.}

\section{ADVISOR}

We first introduce notation to define the imitation gap and illustrate how it arises due to `policy averaging.' Using an `auxiliary policy' construct, we then propose ADVISOR to bridge this gap. Finally, we show how to estimate the auxiliary policy in practice using deep networks. In what follows we will use the terms teacher and expert interchangeably. Our use of ``teacher'' is meant to emphasize that these policies are (1) designed for providing supervision for a student and (2) need not be optimal among all policies.

\subsection{Imitation gap} \label{sec:imitation-gap}

We want an agent to complete task $\cT$  in  environment $\cE$. The environment has states $s\in\cS$ and the agent executes an action $a\in\cA$ at every discrete timestep $t\geq 0$. For simplicity and w.l.o.g.\  assume both $\cA$ and $\cS$ are finite. %
For example, let $\cE$ be a 1D-gridworld in which the agent is tasked with navigating to a location by executing actions to move left or right, as shown in Fig.~\ref{fig:lighthouse-states}. Here and below we assume states $s\in\cS$ encapsulate historical information so that %
$s$ includes the full trajectory of the agent up to  time $t\geq 0$. %
The objective is to find a policy $\pi$, \ie, a mapping from states to distributions over actions, which maximizes an evaluation criterion. Often this policy search is restricted to a set of feasible policies $\feasiblepi$, for instance $\feasiblepi$ may be the set $\{\pi(\cdot;\theta): \theta \in\bR^D\}$ where $\pi(\cdot;\theta)$ is a deep neural network with $D$-dimensional parameters $\theta$. In classical (deep) RL \citep{MnihNature2015,MnihEtAlPMLR2016}, the evaluation criterion is usually the expected $\gamma$-discounted future return. 

We focus on the setting of partially-observed Markov decision processes (POMDPs) where an agent makes decisions without access to the full state information. We model this restricted access by defining a \emph{filtration function} $f:\cS\to\cO_f$ and limiting the space of feasible policies 
to those policies $\feasiblepi_f$ %
for which the value of $\pi(s)$ depends on $s$ only through $f(s)$, 
\ie, so that $f(s)=f(s')$ implies $\pi(s)=\pi(s')$. 
We call any $\pi$ satisfying this condition an \emph{$f$-restricted policy} and the set of feasible $f$-restricted policies $\feasiblepi_f$. 
In a gridworld example, $f$ might restrict $s$ to only include information local to the agent's current position as shown in Figs.~\ref{fig:lighthouse-f1-obs}, \ref{fig:lighthouse-f2-obs}. If a $f$-restricted policy is optimal among all other $f$-restricted policies, we say it is \emph{$f$-optimal}. We call $o\in \cO_f$ a \emph{partial-observation} and for any $f$-restricted policy $\pi_f$ we %
write $\pi_f(o)$ to mean $\pi_f(s)$ if $f(s)=o$.
It is frequently the case that, during training, we have access to a \expert policy which is able to successfully complete the task $\cT$. This \expert policy may have access to the whole environment state and thus may be optimal among all policies. Alternatively, the \expert policy may, like the student, only  make decisions given partial information (\eg, a human who sees exactly the same inputs as the student).
For flexibility we will define the \expert policy as $\piexp_{\fexp}$, denoting it is an $\fexp$-restricted policy for some filtration function $\fexp$. For simplicity, we will assume that $\piexp_{\fexp}$ is $\fexp$-optimal. Subsequently, we will drop the subscript $\fexp$ unless we wish to explicitly discuss multiple \expert{}s simultaneously.

In IL \citep{OsaPajarinenEtAl2018,RossAISTATS2011}, $\pi_f$ is trained to mimic $\piexp$ by minimizing the (expected) cross-entropy between $\pi_f$ and $\piexp$ over a set of sampled states $s\in\cS$: %
\begin{align}
    \min_{\pi_f\in\feasiblepi_f} \mathbb{E}_\mu[CE(\piexp, \pi_f)(S)]\;, \label{eq:basic_imitation}
\end{align}
where $CE(\piexp, \pi_f)(S) = -\piexp(S)\odot \log\pi_f(S)$, $\odot$ denotes the usual dot-product, and $S$ is a random variable taking values $s\in\cS$ with probability measure $\mu:\cS\to[0,1]$. Often $\mu(s)$ is chosen to equal the frequency with which an exploration policy
(\eg, random actions or $\piexp$) visits state $s$ in a randomly initialized episode. When it exists, we denote the policy minimizing Eq.~\eqref{eq:basic_imitation} as $\pi^{\mu, \piexp}_{f}$. When $\mu$ and $\piexp$ are unambiguous, we write $\pi^{\text{IL}}_{f}=\pi^{\mu, \piexp}_{f}$. %

What happens when there is a difference of privilege (or filtration functions) between the \expert and the student?
Intuitively, if the information that a \expert uses to make a decision is unavailable to the student then the student has little hope of being able to mimic the \expert{}'s decisions. As we show in our next example, even when optimizing perfectly, depending on the choice of $f$ and $\fexp$, IL may result in $\pi^{\text{IL}}_{f}$ being uniformly random over a large collection of states. We call the phenomenon that $\pi^{\text{IL}}_f\not= \piexp$ the \emph{imitation gap}. %

\begin{figure}
    \centering
    {
      \phantomsubcaption\label{fig:lighthouse-states}
      \phantomsubcaption\label{fig:lighthouse-ep-example}
      \phantomsubcaption\label{fig:lighthouse-f1-obs}
      \phantomsubcaption\label{fig:lighthouse-f2-obs}
    }
    \includegraphics[width=\textwidth]{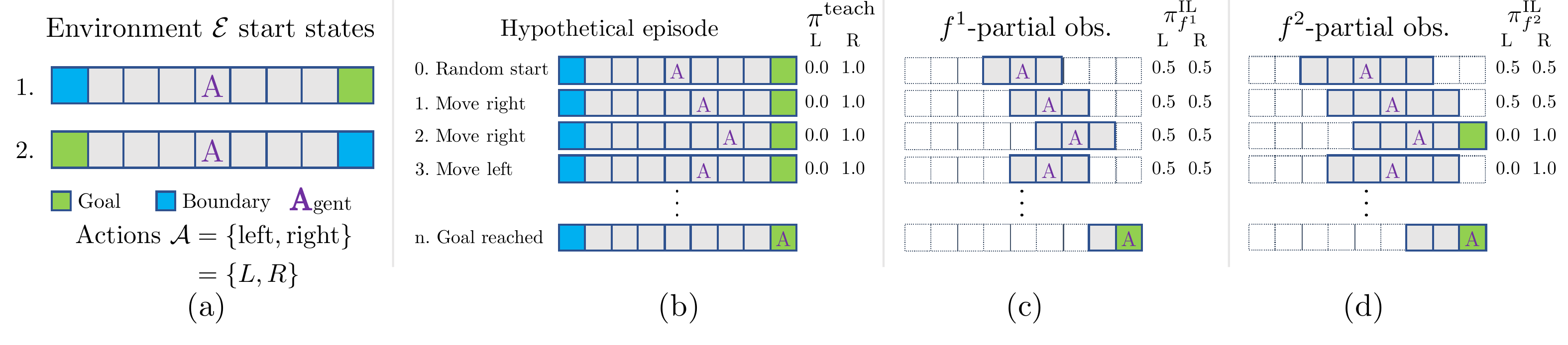}
    \caption{\textbf{Effect of partial observability in a 1-dimensional gridworld environment.} (a) The two start states and actions space for 1D-Lighthouse with $N=4$. (b) A trajectory of the agent following a hypothetical random policy. At every trajectory step we display output probabilities as per the shortest-path expert ($\piexp$) for each state. (c/d) Using the same trajectory from (b) we highlight the partial-observations available to the agent (shaded gray) under different filtration function $f^1, f^2$. Notice that, under $f^1$, the agent does not see the goal within its first four steps. The policies $\pi^{\text{IL}}_{f^1},\pi^{\text{IL}}_{f^2}$, learned by imitating $\piexp$, show that imitation results in sub-optimal policies, \ie, $\pi^{\text{IL}}_{f^1},\pi^{\text{IL}}_{f^2} \neq \piexp$.
}
    \label{fig:imitation-gap}
\end{figure}

\begin{example}[1D-Lighthouse] \label{ex:lighthouse} 
\rebuttal{We illustrate the imitation gap using a  gridworld spanning $\{-N,\dots, N\}$. The two start states correspond to the goal being at either $-N$ or $N$, while the agent is always initialized at $0$ (see Fig.~\ref{fig:lighthouse-states}). Clearly, with full state information, $\piexp$ maps states to an `always left' or `always right' probability distribution, depending on whether the goal is on the left or right, respectively. 
Suppose now that the agent's visibility is constrained to a radius of $i$ (Fig.~\ref{fig:lighthouse-f1-obs} shows $i=1$), \ie, an $f^i$-restricted observation is accessible. An agent following an optimal policy with a visibility of radius $i$ will begin to move deterministically towards any corner, w.l.o.g.\ assume right. When the agent sees the rightmost edge (from position $N-i$), it will either continue to move right if the goal is visible or, if it's not, move left until it reaches the goal (at $-N$). 
Now we may ask: what is the best $f^{i}$-restricted policy that can be learnt by imitating $\piexp$ (\ie, what is $\pi_{f^i}^{\text{IL}}$)? \emph{Tragically, the cross-entropy loss causes $\pi_{f^i}^{\text{IL}}$ to be uniform in a large number of states.} In particular, an agent following policy $\pi_{f^i}^{\text{IL}}$ will execute left (and right) actions with probability $0.5$, until it is within a distance of $i$ from one of the corners. Subsequently, it will head directly to the goal. See the policies highlighted in Figs.~\ref{fig:lighthouse-f1-obs}, \ref{fig:lighthouse-f2-obs}. The intuition for this result is straightforward: until the agent observes one of the corners it cannot know if the goal is to the right or left and, conditional on its observations, each of these events is equally likely under $\mu$ (assumed uniform). Hence for half of these events the \expert will instruct the agent to go right. For the other half the instruction is to go left. See App.~\ref{sec:formal-lighthouse-example} for a rigorous treatment of this example. In Sec.~\ref{sec:experiments} and Fig.~\ref{fig:lh-plots}, we train $f^i$-restricted policies with $f^j$-optimal \expert{}s for a 2D variant of this example. We empirically verify that a student learns a better policy when imitating \expert{}s whose filtration function is closest to their own.}

\end{example}

The above example shows: when a student attempts to imitate an expert that is privileged with information not available to the student, the student learns a version of $\piexp$ in which this privileged information is marginalized out. We formalize this intuition in the following proposition.

\begin{prop}[Policy Averaging]\label{prop:policy-averaging}~\\
In the setting of Section \ref{sec:imitation-gap}, suppose that $\feasiblepi$ contains %
all $f$-restricted policies. Then, for any $s\in\cS$ with $o=f(s)$, we have that $\pi^{\text{IL}}_f(o) = \mathbb{E}_{\mu}[\piexp(S)\mid f(S) = o].$
\end{prop}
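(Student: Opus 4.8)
The plan is to minimize the objective in Eq.~\eqref{eq:basic_imitation} pointwise over partial-observations. Since $\feasiblepi_f$ contains all $f$-partial policies, the only constraint on $\pi_f$ is that it is constant on each fiber $f^{-1}(o)$; there is no coupling across distinct partial-observations $o\in\cO_f$. So I would first rewrite the expected cross-entropy by conditioning on the value of $f(S)$:
\begin{align}
  \mathbb{E}_\mu[CE(\piexp,\pi_f)(S)]
  = \sum_{o\in\cO_f} \mu(f(S)=o)\,
    \mathbb{E}_\mu\big[CE(\piexp,\pi_f)(S)\mid f(S)=o\big].
  \label{eq:proof-decomp}
\end{align}
Because $\pi_f$ depends on $S$ only through $f(S)$, inside the $o$-th conditional expectation we may replace $\pi_f(S)$ by the constant vector $p_o := \pi_f(o)\in\Delta(\cA)$, and the term becomes $-\big(\mathbb{E}_\mu[\piexp(S)\mid f(S)=o]\big)\odot\log p_o$. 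Thus the whole objective decouples into a sum of independent minimization problems, one per $o$ (with nonnegative weights $\mu(f(S)=o)$ that do not affect the argmin of each summand when positive).

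Next I would solve each per-$o$ problem. Writing $q_o := \mathbb{E}_\mu[\piexp(S)\mid f(S)=o]$, which is a probability vector on $\cA$ (being a convex combination of the probability vectors $\piexp(s)$), the subproblem is $\min_{p\in\Delta(\cA)} -q_o\odot\log p$, i.e.\ minimizing the cross-entropy $H(q_o,p)$ over $p$ in the simplex. This is the classical fact that cross-entropy is minimized at $p=q_o$: since $H(q_o,p) = H(q_o) + \mathrm{KL}(q_o\,\|\,p)$ and $\mathrm{KL}\ge 0$ with equality iff $p=q_o$, the unique minimizer is $p=q_o$. (One can also derive this directly by a Lagrange-multiplier / Gibbs-inequality argument.) Assembling the pointwise minimizers gives the $f$-partial policy $o\mapsto q_o$, which therefore attains the minimum in Eq.~\eqref{eq:basic_imitation}; hence $\pi^{\text{IL}}_f(o) = q_o = \mathbb{E}_\mu[\piexp(S)\mid f(S)=o]$, as claimed.

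The only genuinely delicate points — which I expect to be the main obstacle — are the measure-theoretic bookkeeping and a couple of degenerate cases. First, I should be careful that $\mathbb{E}_\mu[\piexp(S)\mid f(S)=o]$ is well-defined: if $\cS$ is countable this is just the elementary conditional expectation $\sum_{s: f(s)=o}\piexp(s)\,\mu(s)/\mu(f(S)=o)$, and for the stated finite-$\cA$, finite-$\cS$ setting there is nothing subtle. Second, for partial-observations $o$ with $\mu(f(S)=o)=0$, the objective is insensitive to the value of $\pi_f(o)$, so $\pi^{\text{IL}}_f$ is only determined $\mu$-a.e.; I would note this and adopt the convention that $\pi^{\text{IL}}_f(o)$ is taken to be the conditional expectation there as well (or simply restrict the claim to $o$ in the support of $f(S)$). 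Third, I should confirm that the minimizer $\pi^{\mu,\piexp}_f$ ``exists'' in the sense used just before the proposition — which it does, since we have exhibited it explicitly — so the notation $\pi^{\text{IL}}_f$ is justified. None of these requires heavy machinery; the core of the argument is the decoupling in Eq.~\eqref{eq:proof-decomp} together with the Gibbs inequality.
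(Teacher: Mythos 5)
Your proof is correct and follows essentially the same route as the paper's: your decomposition over partial-observations $o$ is the law of iterated expectations in disguise, and the per-$o$ minimization via the Gibbs inequality (equivalently $H(q_o,p)=H(q_o)+\mathrm{KL}(q_o\,\|\,p)$) is exactly the paper's closing step. The extra remarks on measure-zero fibers and existence of the minimizer are sound but not needed in the finite setting the paper assumes.
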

Given our definitions, the proof of this proposition is quite straightforward, see Appendix~\ref{sec:policy-averaging-proof}.

The imitation gap provides  theoretical justification for the common practical observation %
that an agent trained via  IL  can often be significantly improved by continuing to train the agent using pure RL (\eg, PPO) \citep{Lowe2020,DasCoRL2018}. Obviously training first with IL and then via pure RL %
is ad~hoc and potentially sub-optimal as discussed in Ex.~\ref{example:poisoned-doors} and empirically shown in Sec.~\ref{sec:experiments}. 
To alleviate this problem, the student should 
imitate the \expert{}'s policy only in settings where the \expert{}'s policy can, in principle, be exactly reproduced by the student. Otherwise the student should learn via `standard' RL. To achieve this we introduce ADVISOR.

\subsection{Adaptive Insubordination (ADVISOR) with Policy Gradients} \label{sec:adaptive-method}

To close the imitation gap, ADVISOR adaptively weights reward-based and imitation losses. Intuitively, it supervises a student by asking it to imitate a \expert{}'s policy only in those states $s\in\cS$ for which the imitation gap is small. For all other states, it trains the student using reward-based RL. 
To simplify notation, we denote the reward-based RL loss via  $\mathbb{E}_\mu[L(\theta, S)]$ for some loss function $L$.\footnote{
For readability, we  implicitly make three key simplifications. First, computing the expectation $\mathbb{E}_\mu[\ldots]$ is generally intractable, hence we cannot directly minimize losses such as $\mathbb{E}_\mu[L(\theta, S)]$. Instead, we approximate the expectation using rollouts from $\mu$ and optimize the empirical loss. 
Second, recent RL methods adjust the measure $\mu$ over states as optimization progresses while we assume it to be static for simplicity. Our final simplification regards the degree to which any loss can be, and is, optimized. In general, losses are often optimized by gradient descent and generally no guarantees are given that the global optimum can be found. Extending our presentation to encompass these issues is straightforward but notationally dense. 
}
This loss formulation is general and spans all policy gradient methods, including A2C and PPO. The imitation loss is the standard cross-entropy loss $\mathbb{E}_\mu[CE(\piexp(S), \pi_f(S;\theta))]$. Concretely, the ADVISOR loss is:
\begin{align}
    \cL^{\text{ADV}}(\theta) = \mathbb{E}_\mu[w(S) \cdot CE(\piexp(S), \pi_f(S;\theta)) + (1- w(S))\cdot L(\theta, S)] \; .\label{eq:advisor-loss}
\end{align}
Our goal is to find a \emph{weight function} $w:\cS \to [0,1]$ where $w(s)\approx 1$ when the imitation gap is small and $w(s)\approx 0$ otherwise.
For this we need an estimator of the distance between $\piexp$ and $\pi^{\text{IL}}_f$ at a state $s$ and a mapping from this distance to weights in $[0,1]$. %

We now define  a distance estimate $d^{0}(\pi, \pi_f)(s)$ between a policy $\pi$ and an $f$-restricted policy $\pi_f$ at a state $s$. We can  use  any common non-negative distance (or divergence) $d$ between probability distributions on $\cA$, \eg, in our experiments we use the  KL-divergence. While there are many possible strategies for using $d$ to estimate $d^{0}(\pi, \pi_f)(s)$, perhaps the simplest of these strategies is to define $d^{0}(\pi, \pi_f)(s) = d(\pi(s), \pi_f(s))$. Note that this quantity does not attempt to use any information about the fiber $f^{-1}(f(s))$ which may be useful in producing more holistic measures of distances.\footnote{Measures using such information include $\max_{s'\in f^{-1}(f(s)}d(\pi(s'), \pi_f(s))$ or a corresponding expectation instead of the maximization, \ie, $\mathbb{E}_{\mu}[d(\pi(S), \pi_f(S)) \mid f(S) = o]$.} Appendix~\ref{sec:other-distances} considers how those distances can be used in lieu of $d^{0}$. Next, using the above, we need to estimate the quantity $d^{0}(\piexp,\pi^{\text{IL}}_f)(s)$. 

Unfortunately it is, in general, impossible to compute $d^{0}(\piexp, \pi^{\text{IL}}_f)(s)$ exactly as it is intractable to compute the optimal minimizer $\pi^{\text{IL}}_{f}$. Instead we leverage an estimator of $\pi^{\text{IL}}_{f}$ which we term $\pi^{\text{aux}}_{f}$, and which we will define in the next section.

Given $\pi^{\text{aux}}_f$ we obtain the estimator $d^{0}(\piexp, \pi^{\text{aux}}_{f})$ of $d^{0}(\piexp, \pi^{\text{IL}}_{f})$. Additionally, we make use of the monotonically decreasing function $m_{\alpha}:\bR_{\geq 0} \to[0,1]$, where $\alpha \geq 0$. %
We define our weight function $w(s)$ for $s\in\cS$ as: 
\begin{align}
    w(s) &= m_{\alpha}(d^{0}(\piexp, \pi^{\text{aux}}_{f})(s))\quad \text{with}\quad
    m_{\alpha}(x) = e^{-\alpha x}. \label{eq:weight}
\end{align}

\subsection{The Auxiliary Policy $\pi^{\text{aux}}$: Estimating $\pi^{\text{IL}}_f$ in Practice} \label{sec:adaptive-in-practice}

In this section we describe how we can, during training, obtain an \emph{auxiliary policy} $\pi^{\text{aux}}_{f}$ which estimates $\pi^{\text{IL}}_{f}$. Given this auxiliary policy we  estimate $d^{0}(\piexp, \pi^{\text{IL}}_{f})(s)$ using the plug-in estimator $d^{0}(\piexp, \pi^{\text{aux}}_{f})(s)$. While plug-in estimators are intuitive and simple to define, they need not be statistically efficient. 
In Appendix \ref{sec:future-work-improving-plugins} we consider possible strategies for improving the statistical efficiency of our plug-in estimator via prospective estimation.
\begin{wrapfigure}{r}{0.5\linewidth}
    \includegraphics[width=1.0\linewidth]{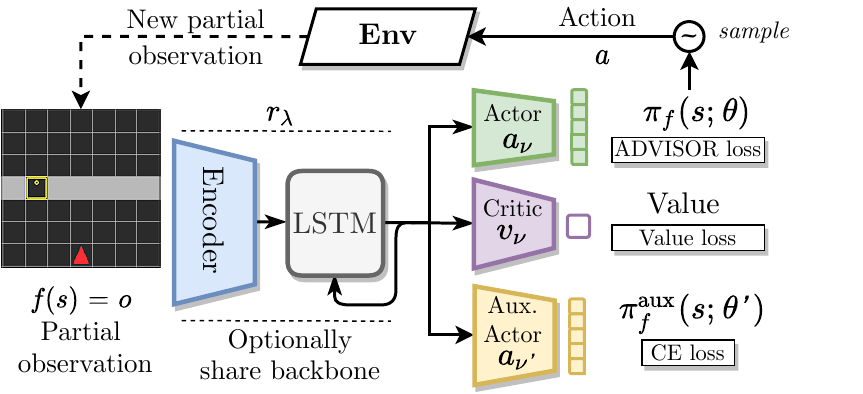}
    \caption{\textbf{Model overview.} An auxiliary actor is added and trained only using IL. The `main' actor policy is trained using the ADVISOR loss.
    }
    \label{fig:model}
    \vspace{-5mm}
\end{wrapfigure}

In \figref{fig:model} we provide an overview of how we compute the estimator $\pi^{\text{aux}}_{f}$ via deep nets. As is common practice~\citep{MnihEtAlPMLR2016,heess2017emergence,jaderberg2017reinforcement,pathak17curiosity,mirowski2017learning,chevalier2018babyai,chen2019audio,JainWeihs2020CordialSync,wani2020multion}, the policy net $\pi_f(\cdot;\theta)$ is composed via $a_\nu\circ r_\lambda$ with $\theta = (\nu, \lambda)$, where $a_\nu$  is the \emph{actor head} (possibly complemented in actor-critic models by a \emph{critic head} $v_\nu$) and $r_\lambda$ is called the \emph{representation network}. 
Generally $a_\nu$ is lightweight, for instance a linear layer or a shallow MLP followed by a soft-max function, while $r_\lambda$ is a deep, and possibly recurrent neural, net. We add
another actor head $a_{\nu'}$ to our existing network which shares the underlying representation $r_\lambda$, \ie, $\pi^{\text{aux}}_f = a_{\nu'} \circ r_\lambda$. We experiment with the actors sharing their representation $r_\lambda$ and estimating $\pi^{\text{IL}}_f$ via two separate networks, \ie, $\theta' = (\nu', \lambda')$. 
\rebuttal{In practice we train $\pi_f(\cdot;\theta)$ and $\pi^{\text{aux}}_{f}(\cdot;\theta)$ jointly using stochastic gradient descent, as summarized in Alg. \ref{alg:onpolicy-advisor}.}

\section{Experiments}\label{sec:experiments}

\begin{figure}
    \centering
    \includegraphics[width=\linewidth]{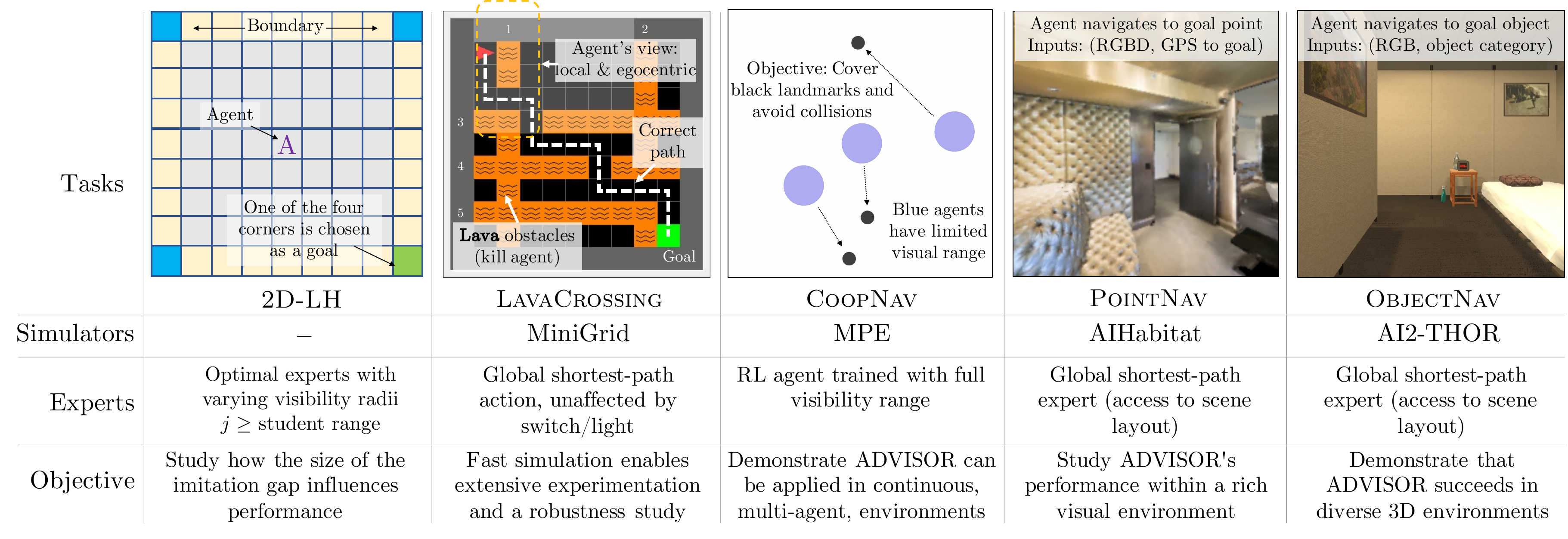}
    \caption{\textbf{Representative tasks from experiments.}
    \twodimlh: Harder 2D variant of the gridworld task introduced in Ex.~\ref{ex:lighthouse}.
    \textsc{LavaCrossing}: one of our 8 tasks in the \minigrid environment requiring safe navigation. We test up-to $15\times15$ grids with $10$ lava rivers.
    \cnav: A multi-agent cooperative task set in multi-agent particle environments.
    \pnav: An agent embodied in the \habitat environment must navigate using egocentric visual observations to a goal position specified by a GPS coordinate.
    \onav: An agent in \robothor must navigate to an object of a given category.
    }
    \label{fig:tasks}
\end{figure}

We rigorously compare ADVISOR %
to IL methods, RL methods, and popularly-adopted (but often ad hoc) IL \& RL combinations. In particular, we evaluate $15$ learning methods. We do this over thirteen tasks -- realizations of Ex.~\ref{example:poisoned-doors} \& Ex.~\ref{ex:lighthouse}, eight tasks of varying complexity within the fast, versatile {\minigrid} environment~\citep{chevalier2018babyai,gym_minigrid}, Cooperative Navigation (\cnav) with reduced visible range in the multi-agent particle environment (\text{MPE}) \citep{MordatchAAAI2018,LoweNIPS2017}, PointGoal navigation (\pnav) using the Gibson dataset in \habitat \citep{xia2018gibson,habitat19iccv}, and ObjectGoal Navigation (\onav) in \robothor\citep{DeitkeEtAl2020}.\footnote{The \robothor environment is a sub-environment of \thor\cite{ai2thor}.} Furthermore, to probe robustness, we train $50$ hyperparameter variants for each of the 15 learning methods for our \minigrid tasks. We find ADVISOR-based methods outperform or match performance of all baselines.

All code to reproduce our experiments will be made public under the Apache 2.0 license.\footnote{See \url{https://unnat.github.io/advisor/} for an up-to-date link to this code.} The environments used are public for academic and commercial use under the Apache 2.0 (\minigrid and \robothor) and MIT licence (MPE and \habitat).

\subsection{Tasks}\label{sec:tasks}
Detailed descriptions of our tasks (and teachers) are deferred to Appendix~\ref{sec:additional-task-details}. See Fig.~\ref{fig:tasks} for a high-level overview of 5 representative tasks.

\subsection{Baselines and ADVISOR-based Methods}\label{sec:baselines} 
We briefly introduce baselines and variants of our ADVISOR method. Further details of all methods are in Appendix~\ref{sec:additional-baseline-details}. For fairness, the same model architecture is shared across all methods (recall \figref{fig:model},~\secref{sec:adaptive-in-practice}). We defer implementation details  to  Appendix~\ref{sec:arch-details}.

$\bullet$~\textbf{RL only.} Proximal Policy Optimization~\cite{schulman2017proximal} serves as the pure RL baseline for all our tasks with a discrete action space. For the continuous and multi-agent \cnav task, we follow prior work and adopt MADDPG~\cite{LoweNIPS2017,LiuCORL2019}.\\[1mm]
$\bullet$~\textbf{IL only.} IL baselines where supervision comes from an expert policy with different levels of teacher-forcing (tf), \ie, tf=0, tf annealed from 1$\rightarrow$0, and tf=1. This leads to Behaviour Cloning (BC), Data Aggregation (DAgger or ~$\dagger$), and $\text{BC}^{\text{tf=1}}$, respectively~\citep{sammut1992learning,bain1995framework,RossAISTATS2011}.\\[1mm]
$\bullet$~\textbf{IL \& RL.} Baselines that use a mix of IL and RL losses, either in sequence or in parallel. These are popularly adopted in the literature to warm-start agent policies. Sequential combinations include BC then PPO ({\bcppo}), DAgger then PPO ($\dagger\rightarrow\text{PPO}$), and $\text{BC}^{\text{tf=1}}\rightarrow \text{PPO}$. The parallel combination of $\text{BC}+\text{PPO} (\text{static})$ is a static analog of our adaptive combination of IL and RL losses.\\[1mm]
$\bullet$~\textbf{Demonstration-based.} These agents imitate expert demonstrations and hence get no supervision beyond the states in the demonstrations. We implement $\text{BC}^{\text{demo}}$, its combination with PPO ($\text{BC}^{\text{demo}}+\text{PPO}$), and Generative Adversarial Imitation Learning (GAIL)~\cite{ho2016generative}.\\[1mm]
$\bullet$~\textbf{ADVISOR-based (ours).} Our Adaptive Insubordination methodology can learn from an expert policy and can be given a warm-start via BC or DAgger. This leads to ADVISOR (ADV), $\text{BC}^{\text{tf=1}}\rightarrow\text{ADV}$, and $\dagger\rightarrow\text{ADV}$) baselines. Similarly, $\text{ADV}^{\text{demo}}+\text{PPO}$ employs Adaptive Insubordination to learn from expert demonstrations while training with PPO on on-policy rollouts.\\

\subsection{Evaluation}\label{sec:evaluation}

\begin{figure}
    \centering
    {
    \phantomsubcaption\label{fig:poisoned-doors-eval-train-steps}
      \phantomsubcaption\label{fig:lava-cross-eval-train-steps}
      \phantomsubcaption\label{fig:lava-cross-switch-eval-train-steps}
      \phantomsubcaption\label{fig:lava-cross-corrupt-eval-train-steps}
      \phantomsubcaption\label{fig:poisoned-doors-eval}
      \phantomsubcaption\label{fig:lava-cross-eval}
      \phantomsubcaption\label{fig:lava-cross-switch-eval}
      \phantomsubcaption\label{fig:lava-cross-corrupt-eval}
    }
    \includegraphics[width=1\linewidth]{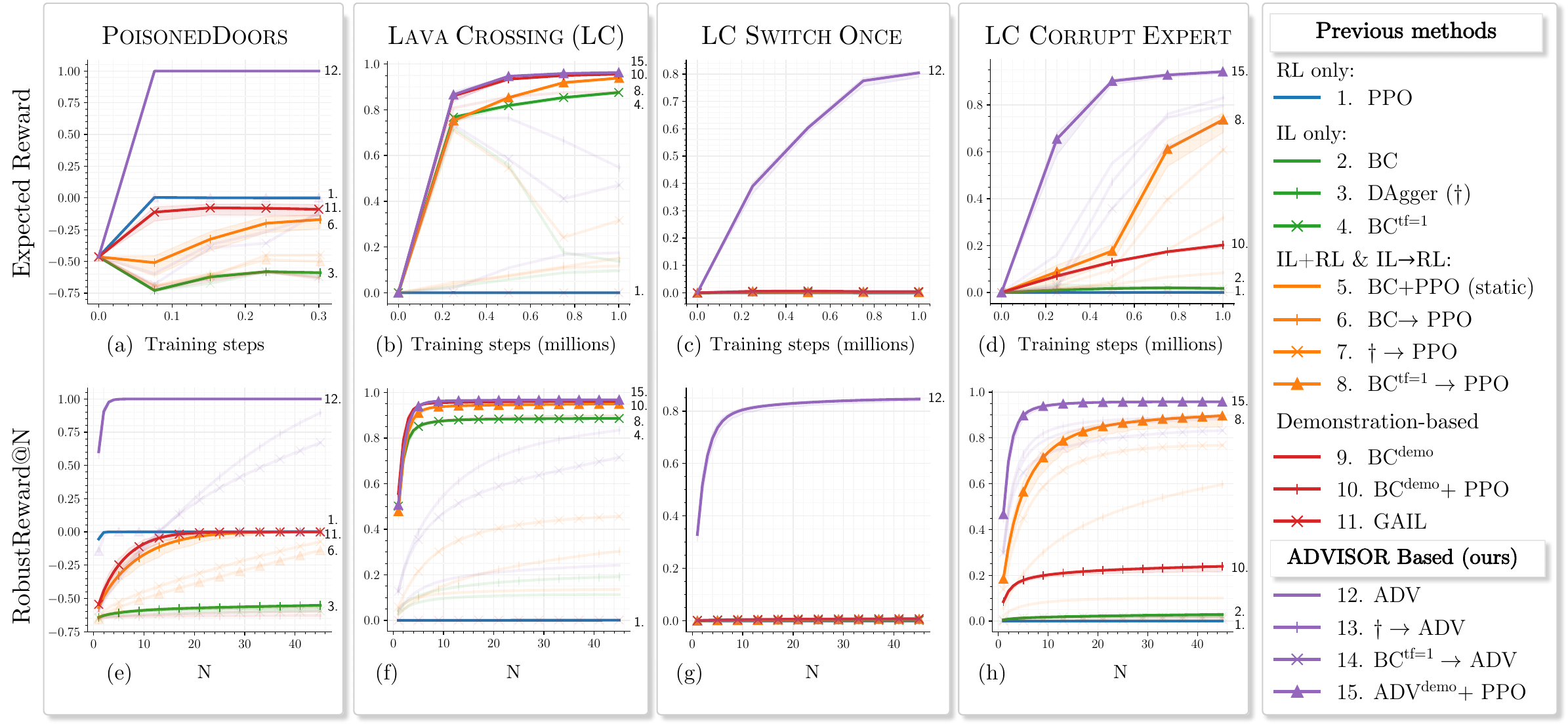}
    \caption{\textbf{Evaluation following \citep{DodgeGCSS19}.} Plots for 15 training routines in four selected tasks (additional plots in appendix). For clarity, we highlight the best performing training routine within five categories, \eg, RL only, IL only \etc (details in~\secref{sec:baselines}) with all other plots shaded lighter. (a)-(d) As described in~\secref{sec:evaluation} we plot \emph{RobustReward@$10$} at multiple points during training. (e)-(f) Plots of \emph{RobustReward@$N$} for values of $N\in\{1,\dots,45\}$. Recall that \emph{RobustReward@$N$} is the expected validation reward of best-found model when allowed $N$ random hyperparameter evaluations.}
    \label{fig:minigrid-tasks}
\end{figure}

\begin{table}[t]
\centering
\resizebox{\textwidth}{!}{%
\begin{tabular}{l>{\columncolor{ColorPNav}}c>{\columncolor{ColorFurnMove}}c>{\columncolor{ColorFurnMove}}c>{\columncolor{ColorFurnMove}}c>{\columncolor{ColorFurnMove}}c>{\columncolor{ColorFootball}}c>{\columncolor{ColorFootball}}c>{\columncolor{ColorFootball}}c>{\columncolor{ColorFootball}}c>{\columncolor{ColorFootball}}c>{\columncolor{ColorFootball}}cc}
\toprule
\multicolumn{1}{c}{Tasks $\rightarrow$} & \textbf{\textsc{PD}} & \multicolumn{4}{c}{\cellcolor{ColorFurnMove}\textbf{\textsc{LavaCrossing}}} & \multicolumn{4}{c}{\cellcolor{ColorFootball}\textbf{\textsc{WallCrossing}}}  \\
 Training routines $\downarrow$  & - & Base Ver. & Corrupt Exp. & Faulty Switch & Once Switch & Base Ver. & Corrupt Exp. & Faulty Switch & Once Switch  \\ \midrule
RL & 0 & 0 & 0 & 0.01 & 0 & 0.09 & 0.07 & 0.12 & 0.05 \\
IL & -0.59 & 0.88 & 0.02 & 0.02 & 0 & 0.96 & 0.05 & 0.17 & 0.11 \\
IL \& RL & -0.17 & 0.94 & 0.74 & 0.04 & 0 & \textbf{0.97} & 0.18 & 0.17 & 0.1 \\
Demo. Based & -0.09 & \textbf{0.96} & 0.2 & 0.02 & 0 & \textbf{0.97} & 0.07 & 0.18 & 0.11 \\
ADV. Based (ours) & \textbf{1} & \textbf{0.96} & \textbf{0.94} & \textbf{0.77} & \textbf{0.8} & \textbf{0.97} & \textbf{0.31} & \textbf{0.38} & \textbf{0.45}\\ \bottomrule
\end{tabular}
}%
~\vspace{3mm}
\caption{\textbf{Expected rewards for the \textsc{PoisonedDoors} task and \textsc{MiniGrid} tasks.} For each of our 15 training routines we report the expected maximum validation set performance (when given a budget of 10 random hyperparameter evaluations) after training for $\approx$300k steps in \textsc{PoisonedDoors} and $\approx$1Mn steps in our 8 \minigrid tasks. The maximum reward is 1 for the \minigrid tasks.} \label{tab:pd-and-mingrid-results}
\end{table}

\begin{table}[t]
\centering
\resizebox{\textwidth}{!}{%
\begin{tabular}{l>{\columncolor{ColorPNav}}c>{\columncolor{ColorPNav}}c>{\columncolor{ColorPNav}}c>{\columncolor{ColorPNav}}c>{\columncolor{ColorFurnMove}}c>{\columncolor{ColorFurnMove}}c>{\columncolor{ColorFurnMove}}c>{\columncolor{ColorFurnMove}}c>{\columncolor{ColorFootball}}c>{\columncolor{ColorFootball}}c}

\toprule
\multicolumn{1}{c}{Tasks $\rightarrow$} & \multicolumn{4}{c}{\cellcolor{ColorPNav}\textbf{PointGoal Navigation}} & \multicolumn{4}{c}{\cellcolor{ColorFurnMove}\textbf{ObjectGoal Navigation}} & \multicolumn{2}{c}{\cellcolor{ColorFootball}\textbf{Cooperative Navigation}} \\
\multicolumn{1}{c}{} & \multicolumn{2}{c}{\cellcolor{ColorPNav}SPL} & \multicolumn{2}{c}{\cellcolor{ColorPNav}Success} & \multicolumn{2}{c}{\cellcolor{ColorFurnMove}SPL} & \multicolumn{2}{c}{\cellcolor{ColorFurnMove}Success} & \multicolumn{2}{c}{\cellcolor{ColorFootball}Reward} \\
Training routines $\downarrow$ & \emph{@10\%} & \emph{@100\%} & \emph{@10\%} & \emph{@100\%} & \emph{@10\%} & \emph{@100\%} & \emph{@10\%} & \emph{@100\%} & \emph{@10\%} & \emph{@100\%} \\
\midrule
RL only
& 30.9 & 54.7 & 54.7 & 79.0 &
6.7 & 13.1 & 11.1 & \textbf{31.6} &
$-$561.8 & $-$456.0 \\
IL only 
& 30.1 & 68.7 & 35.5 & 76.7 &
3.8 & 9 & 8.8 & 13.6 &
$-$460.3 & $-$416.7 \\
IL + RL static &
48.9 & 71.5 & 56.7 & 78.2 &
6.5 & 11.3 & 11.7 & 19.8 &
$-$475.5 & $-$424.6 \\
ADVISOR (ours) &
\textbf{57.7} & \textbf{77.1} & \textbf{67.3} & \textbf{88.2} &
\textbf{11.9} & \textbf{14.1} & \textbf{22.7} & 29.9 &
\textbf{$-$419.9} & \textbf{$-$405.6} \\
\bottomrule
\end{tabular}%
} %
\vspace{0.03in}
\caption{\textbf{Quantitative results for high-fidelity visual environments and continuous control.} Validation set performance after 10\% and 100\% of training has completed for four training routines on the \pnav, \onav, and \cnav tasks (specifics of these routines can be found in the Appendix). For \pnav and \onav we include the common success weighted path length (SPL) metric \cite{anderson2018evaluation} in addition to the success rate.  
}
\label{tab:complex-tasks}
\vspace{-4mm}
\end{table}

\noindent\textbf{Fair Hyperparameter Tuning.} Often unintentionally done, extensively tuning the hyperparameters ({hps}) of a proposed method and not those of the baselines can introduce unfair bias into evaluations. We avoid this by considering two strategies. 
For \pd and all \textsc{MiniGrid} tasks, we follow recent best practices \citep{DodgeGCSS19}.
Namely, we tune each method by randomly sampling a fixed number of {hps} and reporting, for each baseline, an estimate of 
\begin{align}
\text{\emph{RobustReward@$K$}}=\mathbb{E}[\text{Val. reward of best model from $k$ random hyperparam. evaluations}]
\end{align}
for $1\leq k\leq 45$. For this we must train 50 models per method, \ie, 750 for each of these nine tasks. In order to show learning curves over training steps we also report \emph{RobustReward@$10$} at 5 points during training. More details in Appendix~\ref{sec:hyperparameter-tuning}. For \twodimlh, we tune the {hps} of a competing method and use these {hps} for all other methods.\\
\noindent\textbf{Training.} For the eight \textsc{MiniGrid} tasks, we train each of the $50$ training runs for 1 million steps. For \twodimlh/\pd, models saturate much before $3\cdot 10^5$ steps. \pnav, \onav, and \cnav are trained for standard budgets of 50Mn, 100Mn, and 1.5Mn steps. Details are in Appendix~\ref{sec:training-imp-details}.\\
\noindent\textbf{Metrics.} We record standard metrics for each task. This includes avg.\  rewards (\pd, \minigrid tasks, and \onav), and avg.\ episode lengths (\twodimlh). Following visual navigation works~\cite{anderson2018evaluation,habitat19iccv,DeitkeEtAl2020}, we report success rates and success-weighted path length (SPL) for \pnav and \onav. In the following, we report a subset of the above and defer additional plots to Appendix~\ref{sec:more-plots}.

\subsection{Results}\label{sec:results}

In the following, we include takeaways based on the results in \figref{fig:minigrid-tasks}, \figref{fig:lh-plots}, \tabref{tab:pd-and-mingrid-results}, and \tabref{tab:complex-tasks}.

\noindent\textbf{Smaller imitation gap $\implies$ better performance.} A central claim of our paper is that the imitation gap is not merely a theoretical concern: the degree to which the teacher is privileged over the student has significant impact on the student's performance.
To study this empirically, we vary the degree to which teachers are privileged over its students in our \twodimlh task. In particular, we use behavior cloning to train an $f^i$-restricted policy (\ie, an agent that can see $i$ grid locations away) using an $f^j$-optimal teacher 25 times. Each policy is then evaluated on 200 random episodes and the average episode length (lower being better) is recorded. For select $i,j$ pairs we show boxplots of the 25 average episode lengths in Fig.~\ref{fig:lh-plots}. See our appendix for similar plots when using other training routines (\eg, ADVISOR).
\begin{wrapfigure}{r}{0.5\linewidth}
    \vspace{-4mm}
    \includegraphics[width=1\linewidth]{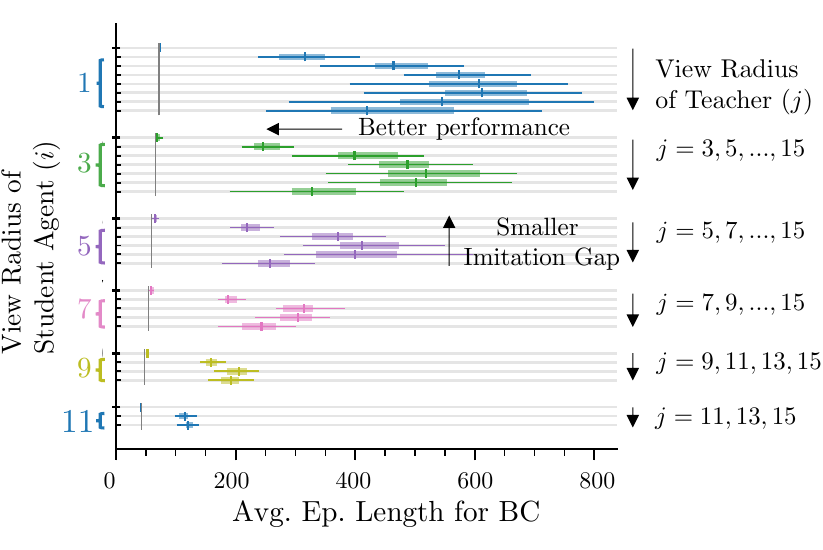}
    \vspace{-7mm}
\caption{The size of the imitation gap directly impacts performance (in  \twodimlh).
    }
    \label{fig:lh-plots}
    \vspace{-6mm}
\end{wrapfigure}
Grey vertical lines show optimal average episode lengths for $f^i$-restricted policies. We find that training an $f^i$-restricted policy with an $f^j$-expert results in a near optimal policy when $i=j$ but even small increases in $j$ dramatically decrease performance. While performance tends to drop with increasing $j$, the largest $i,j$ gaps do not consistently correspond to the worst performing models. While this seems to differ from our results in Ex.~\ref{ex:lighthouse}, recall that there the policy $\mu$ was fixed while here it varies through training, resulting in complex learning dynamics. Surprisingly we also find that, even when there is no imitation gap (\eg, the $i=j$ case), ADVISOR can outperform BC, see App.~\ref{sec:lighthouse-efficiency-study}.

\noindent\textbf{ADVISOR outperforms, even in complex visual environments.} Across all of our tasks, ADVISOR-based methods perform as well or better than competing methods. In particular, see Tab.~\ref{tab:pd-and-mingrid-results} for our results on the \textsc{PoisonedDoors} (\pd) and \minigrid tasks
and Tab.~\ref{tab:complex-tasks} for our results on the \pnav, \onav, and \cnav tasks. \twodimlh results are deferred to the Appendix.

While the strong performance of ADVISOR is likely expected on our \pd, \minigrid, and \twodimlh tasks (indeed we designed a subset of these  with the explicit purpose of studying the imitation gap), it is nonetheless surprising to see that in the \pd and \textsc{LC Once Switch} tasks, all non-ADVISOR methods completely fail. Moreover, it is extremely promising to see that ADVISOR can provide substantial benefits in a variety of standard tasks, namely \onav, \pnav, and \cnav with limited visible range. Note that \onav and \pnav are set in 3D high-fidelity visual environments while \cnav requires multi-agent collaboration in a continuous space.

\noindent\textbf{ADVISOR is sample efficient.} To understand the sample efficiency of ADVISOR, we plot validation set performance  over training of select tasks (see \Cref{fig:poisoned-doors-eval-train-steps,fig:lava-cross-eval-train-steps,fig:lava-cross-switch-eval-train-steps,fig:lava-cross-corrupt-eval-train-steps}) and, in Table \ref{tab:complex-tasks} we show performance of our models after 10\% of training has elapsed for the \onav, \pnav, and \cnav tasks. Note that in Table \ref{tab:complex-tasks}, ADVISOR trained models frequently reach better performance after 10\% of training than other methods manage to reach by the end of training.

\noindent\textbf{ADVISOR is robust.} Rigorously studying sensitivity to hyperparameter choice requires retraining every method under consideration tens to hundreds of times. This computational task can make evaluating our methods on certain tasks infeasible (training a single \pnav or \onav model can easily require a GPU-week of computation). Because of these computational constraints, we limit our study of robustness to the \pd and \minigrid tasks. In \Cref{fig:poisoned-doors-eval,fig:lava-cross-eval,fig:lava-cross-switch-eval,fig:lava-cross-corrupt-eval} (additional results in Appendix) we plot, for each of the 15 evaluated methods, how the expected performance of each method behaves as we increase the budget of random hyperparameter evaluations. In general, relatively few hyperparameter evaluations are required for ADVISOR before a high performance model is expected to be found.

\noindent\textbf{Expert demonstrations can be critical to success.} While it is frequently assumed that on-policy expert supervision is better than learning from off-policy demonstrations, we found several instances in our \minigrid experiments where demonstration-based methods outperformed competing methods. See, for example,  \Cref{fig:lava-cross-eval-train-steps,fig:lava-cross-eval}. In such cases our demonstration-based ADVISOR variant (see Appendix~\ref{sec:additional-baseline-details} for details) performed very well. 

\noindent\textbf{ADVISOR helps even when the expert is corrupted.} In \textsc{LC Corrupt Expert} and \textsc{WC Corrupt Expert}, where the expert is designed to be corrupted (outputting random actions as supervision) when the agent gets sufficiently close to the goal. While ADVISOR was not designed with the possibility of corrupted experts in mind, \Cref{fig:lava-cross-corrupt-eval-train-steps,fig:lava-cross-corrupt-eval} (see also Table \ref{tab:pd-and-mingrid-results}) show that ADVISOR can succeed despite this corruption.

\vspace{-2mm}
\section{Conclusion}
\vspace{-2mm}
We propose the \emph{imitation gap} as one explanation for the empirical observation that imitating ``more intelligent'' teachers can lead to worse policies. While prior work has, implicitly, attempted to bridge this gap,
we introduce a principled adaptive weighting technique (ADVISOR), which we test on a suite of thirteen tasks. Due to the fast rendering speed of \textsc{MiniGrid}, \pd, and \twodimlh, we could undertake a study where we trained over $6$ billion steps, to draw statistically significant inferences. 

\section{Limitations and Societal Impact}\label{sec:limitations-societal-impact}
While we have attempted to robustly evaluate our proposed ADVISOR methodology, we have primarily focused our experiments on navigational tasks where shortest path experts can be quickly computed. Further work is needed to validate that ADVISOR can be successful in other domains, \eg, imitation in interactive robotic tasks or natural language applications.

While the potential for direct negative societal impact of this work is small, it is worth noting that, in enabling agents to learn more effectively from expert supervision, this work makes imitation learning a more attractive option to RL researchers. If expert supervision is obtained from humans, RL agents trained with such data will inevitably reproduce any (potentially harmful) biases of these humans.

\section*{Acknowledgements}
This material is based upon work supported in part by the National Science Foundation under Grants No.\ 1563727, 1718221, 1637479, 165205, 1703166, 2008387, 2045586, 2106825, MRI \#1725729, NIFA award 2020-67021-32799, Samsung, 3M, Sloan Fellowship, NVIDIA Artificial Intelligence Lab, Allen Institute for AI, Amazon, AWS Research Awards, and Siebel Scholars Award. We thank Nan Jiang and Tanmay Gangwani for feedback on this work.

\bibliography{collab}
\bibliographystyle{abbrvnat}

\clearpage
\appendix

\section*{Appendix: Bridging the Imitation Gap by Adaptive Insubordination}

\setcounter{table}{0}
\renewcommand{\thetable}{\Alph{section}.\arabic{table}}
\setcounter{figure}{0}
\renewcommand{\thefigure}{\Alph{section}.\arabic{figure}}
\setcounter{algocf}{0}
\renewcommand{\thealgocf}{\Alph{section}.\arabic{algocf}}

The appendix includes theoretical extensions of ideas presented in the main paper and details of empirical analysis. We structure the appendix into the following subsections:
\begin{enumerate}
    \item[\ref{sec:formal-lighthouse-example}] A formal treatment of Ex.~\ref{ex:lighthouse} on 1D-Lighthouse.
    \item[\ref{sec:policy-averaging-proof}] Proof of Proposition~\ref{prop:policy-averaging}.
    \item[\ref{sec:other-distances}] Distance measures beyond $d^{0}_{\pi}(\pi_f)(s) = d(\pi(s), \pi_f(s))$ utilized in ADVISOR.\footnote{We overload main paper's notation $d^{0}(\pi,\pi_f)(s)$ with $d^{0}_{\pi}(\pi_f)(s)$}
    \item[\ref{sec:future-work-improving-plugins}] Future strategies for improving statistical efficiency of $d^{0}_{\piexp}(\pi^{\text{IL}}_{f})(s)$ estimator and a prospective approach towards it.
    \item[\ref{sec:additional-task-details}] Descriptions of all the tasks that we evaluate baselines on, including values for grid size, obstacles, corruption distance~\etc. We also include details about observation space for each of these tasks.
    \item[\ref{sec:lighthouse-efficiency-study}] Initial results showing that ADVISOR can outperform behavior cloning even when there is no imitation gap.
    \item[\ref{sec:additional-baseline-details}] Additional details about nature of learning, expert supervision and hyperparameters searched for each baseline introduced in~\secref{sec:baselines}.
    \item[\ref{sec:arch-details}] Details about the underlying model architecture for all baselines across different tasks.
    \item[\ref{sec:hyperparameter-tuning}] Methodologies adopted for ensuring fair hyperparameter tuning of previous baselines when comparing ADVISOR to them.
    \item[\ref{sec:training-imp-details}] Training implementation including maximum steps per episode, reward structure and computing infrastructure adopted for this work. We clearly summarize all structural and training hyperparameters for better reproducibility.
    
    \item[\ref{sec:more-plots}] Additional results including plots for all tasks to supplement~\figref{fig:minigrid-tasks}, 
    a table giving an expanded version of the \tabref{tab:pd-and-mingrid-results}, and learning curves to supplement \tabref{tab:complex-tasks}.
    
\end{enumerate}

\section{Additional Information}

\subsection{Formal treatment of Example~\ref{ex:lighthouse}}
\label{sec:formal-lighthouse-example}
Let $N\geq 1$ and consider a 1-dimensional grid-world with states $\cS = \{-N, N\} \times \{0, \ldots, T\} \times \{-N,\ldots, N\}^T$. Here $g\in\{-N, N\}$ are possible goal positions, elements $t\in\{0, \ldots, T\}$ correspond to the episode's current timestep, and $(p_i)_{i=1}^T\in\{-N,\ldots, N\}^T$ correspond to possible agent trajectories of length $T$. Taking action $a\in\cA=\{\text{left}, \text{right}\}=\{-1,1\}$ in state $(g, t, (p_i)_{i=1}^T)\in\cS$ results in the deterministic transition to state
$(g, t+1, (p_1,\dots,p_t, \text{clip}(p_t+a,-N, N), 0, \ldots, 0))$. 
An episode start state is chosen uniformly at random from the set $\{(\pm N, 0, (0, \ldots, 0))\}$ and the goal of the agent is to reach some state $(g, t, (p_i)_{i=1}^T)$ with $p_t=g$ in the fewest steps possible. We now consider a collection of filtration functions $f^i$, that allow  the agent to see spaces up to $i$ steps left/right of its current position but otherwise has perfect memory of its actions. See Figs.~\ref{fig:lighthouse-f1-obs},~\ref{fig:lighthouse-f2-obs} for examples of $f^1$- and $f^2$-restricted observations. For $0\leq i\leq N$ we define $f^i$ so that
\begin{align}
    f^i(g, t, (p_i)_{i=1}^T) &= ((\ell_0, \dots, \ell_t), (p_{1} - p_{0}, \ldots, p_{t} - p_{t-1})) \quad \text{and}  \\
    \ell_j &= (1_{[p_j + k=N]} - 1_{[p_j + k=-N]} \mid k\in\{-i, \ldots, i\}) \quad \text{for $0\leq j\leq t$.}
\end{align}
Here $\ell_j$ is a tuple of length $2\cdot i + 1$ and corresponds to the agent's view at timestep $j$ while $p_{k+1} - p_{k}$ uniquely identifies the action taken by the agent at timestep $k$. Let $\piexp$ be the optimal policy given full state information so that $\piexp(g, t, (p_i)_{i=1}^T) = (1_{[g=-N]}, 1_{[g=N]})$ and let $\mu$ be a uniform distribution over states in $\cS$. It is straightforward to show that an agent following policy $\pi_{f^i}^{\text{IL}}$ will take random actions until it is within a distance of $i$ from one of the corners $\{-N,N\}$ after which it will head directly to the goal, see the policies highlighted in Figs. \ref{fig:lighthouse-f1-obs}, \ref{fig:lighthouse-f2-obs}. The intuition for this result is straightforward: until the agent observes one of the corners it cannot know if the goal is to the right or left and, conditional on its observations, each of these events is equally likely under $\mu$. Hence in half of these events the expert will instruct the agent to go right and in the other half to go left. The cross entropy loss will thus force $\pi_{f^i}^{\text{IL}}$ to be uniform in all such states. Formally, we will have, for $s=(g, t, (p_i)_{i=1}^T)$, $\pi^{\text{IL}}_{f^i}(s)=\piexp(s)$ if and only if $\min_{0\leq q\leq t}(p_q) - i \leq -N$ or $\max_{0\leq q\leq t}(p_q) + i \geq N$ and, for all other $s$, we have $\pi_{f^i}^{\text{IL}}(s) = (1/2, 1/2)$. In Sec.~\ref{sec:experiments}, see also Fig.~\ref{fig:lh-plots}, we train $f^i$-restricted policies with $f^j$-optimal teachers for a 2D variant of this example.\hfill$\blacksquare$

\subsection{Proof of Proposition \ref{prop:policy-averaging}}\label{sec:policy-averaging-proof}
We wish to show that the minimizer of $\mathbb{E}_\mu[-\piexp_{f^e}(S)\odot \log\pi_f(S)]$ among all $f$-restricted policies $\pi_f$ is the policy $\ol{\pi} = \mathbb{E}_{\mu}[\piexp(S)\mid f(S)]$. This is straightforward, by the law of iterated expectations and as $\pi_f(s)=\pi_f(f(s))$ by definition. We obtain 
\begin{align}
    \mathbb{E}_\mu[-\piexp_{f^e}(S)\odot \log\pi_f(S)] &= -\mathbb{E}_\mu[E_\mu[\piexp_{f^e}(S)\odot \log\pi_f(S) \mid f(S)]] \nonumber\\
    &= -\mathbb{E}_\mu[E_\mu[\piexp_{f^e}(S)\odot \log\pi_f(f(S)) \mid f(S)]]\nonumber\\
    &= -\mathbb{E}_\mu[E_\mu[\piexp_{f^e}(S)\mid f(S)]\odot \log\pi_f(f(S))]\nonumber\\
    &= \mathbb{E}_\mu[-\ol{\pi}(f(S))\odot \log\pi_f(f(S))]\;. \label{eq:policy-averaging-simplified}
\end{align}
Now let $s\in\cS$ and let $o=f(s)$. It is well known, by Gibbs' inequality, that $-\ol{\pi}(o)\odot \log\pi_f(o)$ is minimized (in $\pi_f(o)$) by letting $\pi_f(o) = \ol{\pi}(o)$ and this minimizer is feasible as we have assumed that $\Pi_f$ contains \emph{all} $f$-restricted policies. Hence it follows immediately that \equref{eq:policy-averaging-simplified} is minimized by letting $\pi_f=\ol{\pi}$ which proves the claimed proposition.

\begin{algorithm}[t]
 \rebuttal{
  \SetAlgoLined
  \SetAlgoNoEnd
  \KwIn{Trainable policies $(\pi_{f}, \pi^{\text{aux}}_{f})$, expert policy $\piexp$, rollout length $L$, environment $\cE$.}
  \KwOut{Trained policy}
  \Begin{
    Initialize the environment $\cE$\\
    $\theta\gets$ randomly initialized parameters \\
    \While{Training completion criterion not met} {
        Take $L$ steps in the environment using $\pi_{f}(\cdot;\theta)$ and record resulting rewards and observations (restarting $\cE$ whenever the agent has reached a terminal state) \\
        Evaluate $\pi^{\text{aux}}_{f}(\cdot;\theta)$ and $\piexp$ at each of the above steps \\
        $L \gets$ the empirical version of the loss from Eq. \eqref{eq:advisor-loss} computed using the above rollout\\
        Compute $\nabla_{\theta}L$ using backpropagation \\
        Update $\theta$ using $\nabla_{\theta}L$ via gradient descent \\
    }
    \Return{$\pi_f(\cdot;\theta)$}
  }
  \caption{\textbf{On-policy ADVISOR algorithm overview.} Some details omitted for clarity.} \label{alg:onpolicy-advisor}
  }
\end{algorithm}

\subsection{Other Distance Measures} \label{sec:other-distances}

As discussed in Section~\ref{sec:adaptive-method}, there are several different choices one may make when choosing a measure of distance between the expert policy $\piexp$ and an $f$-restricted policy $\pi_f$ at a state $s\in \cS$. The measure of distance we use in our experiments, $d^{0}_{\piexp}(\pi_{f})(s) = d(\piexp(s), \pi_f(s))$, has the (potentially) undesirable property that $f(s)=f(s')$ does not imply that $d^{0}_{\piexp}(\pi_{f})(s)=d^{0}_{\piexp}(\pi_{f})(s')$. While an in-depth evaluation of the merits of different distance measures is beyond this current work, we suspect that a careful choice of such a distance measure may have a substantial impact on the speed of training. The following proposition lists a collection of possible distance measures with a conceptual illustration given in \figref{fig:distance-demo}.

\begin{prop}
Let $s\in \cS$ and $o=f(s)$ and for any $0<\beta<\infty$ define, for any policy $\pi$ and $f$-restricted policy $\pi_f$,
\begin{align}
    d^{\beta}_{\mu,\pi}(\pi_f)(s)
    &= E_\mu[\left(d^{0}_{\pi}(\pi_f)(S)\right)^\beta \mid f(S) = f(s)]^{1/\beta},
\end{align}
with $d^{\infty}_{\mu,\pi}(\pi_f)(s)$ equalling the essential supremum of $d^{0}_{\pi}(\pi_f)$ under the conditional distribution $P_\mu(\cdot \mid f(S) = f(s))$. As a special case note that
\begin{align*}
    d^{1}_{\mu,\pi}(\pi_f)(s) &= E_{\mu}[d^{0}_{\pi}(\pi_f)(S) \mid f(S) = f(s)].
\end{align*}

Then, for all $\beta \geq 0$ and $s\in\cS$ (almost surely $\mu$), we have that $\pi(s) \not= \pi_f(f(s))$ if and only if $d^{\beta}_{\pi}(\pi_f)(s) > 0$.
\end{prop}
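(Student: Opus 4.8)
The plan is to read the biconditional off the two defining properties of the underlying divergence $d$ — nonnegativity and separation, $d(p,q)=0\iff p=q$ — together with one elementary fact about nonnegative random variables, and then to handle the ``almost surely $\mu$'' clause with a short conditioning argument. First I would dispose of $\beta=0$: since $\pi_f$ is $f$-partial we have $\pi_f(f(s))=\pi_f(s)$, so $d^{0}_{\pi}(\pi_f)(s)=d(\pi(s),\pi_f(s))$, and separation gives $d^{0}_{\pi}(\pi_f)(s)>0$ exactly when $\pi(s)\neq\pi_f(f(s))$, with no measure-theoretic caveat required.

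Next, for $0<\beta<\infty$, fix $s$ and put $o=f(s)$. Raising the definition to the $\beta$-th power, $d^{\beta}_{\mu,\pi}(\pi_f)(s)^{\beta}=\mathbb{E}_{\mu}[(d^{0}_{\pi}(\pi_f)(S))^{\beta}\mid f(S)=o]$. The integrand is nonnegative, so this conditional expectation is $\geq 0$ and equals $0$ iff $(d^{0}_{\pi}(\pi_f)(S))^{\beta}$ — equivalently $d^{0}_{\pi}(\pi_f)(S)$, hence, by separation, $\pi(S)=\pi_f(S)$ — holds for $P_{\mu}(\cdot\mid f(S)=o)$-almost every $S$ in the fiber $f^{-1}(o)$. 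For $\beta=\infty$ I would rerun this with ``essential supremum equals $0$'' in place of ``mean equals $0$'', since the essential supremum of a nonnegative function vanishes precisely when the function vanishes a.e. So for every $\beta\in(0,\infty]$ we get: $d^{\beta}_{\mu,\pi}(\pi_f)(s)=0$ iff $d^{0}_{\pi}(\pi_f)=0$ $\mu$-a.e.\ on $f^{-1}(f(s))$.

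The remaining step — converting this fiber-level statement into the pointwise equivalence — is the only real friction, and is exactly what the ``$\mu$-a.s.'' wording buys. One direction is immediate from the tower property: the set $\{d^{0}_{\pi}(\pi_f)(S)>0\}\cap\{d^{\beta}_{\mu,\pi}(\pi_f)(S)=0\}$ has $\mu$-probability $\mathbb{E}_{\mu}[P_{\mu}(d^{0}_{\pi}(\pi_f)(S)>0\mid f(S))\cdot 1_{\{d^{\beta}_{\mu,\pi}(\pi_f)(S)=0\}}]$, and on the event $\{d^{\beta}=0\}$ the inner conditional probability is $0$, so this set is $\mu$-null; thus $\mu$-a.s.\ $d^{0}_{\pi}(\pi_f)(s)>0\Rightarrow d^{\beta}_{\mu,\pi}(\pi_f)(s)>0$. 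For the converse I would use finiteness of $\cS$: if $d^{0}_{\pi}(\pi_f)(s)>0$ but $d^{\beta}_{\mu,\pi}(\pi_f)(s)=0$, then $s$ must carry zero $\mu$-mass (otherwise it contributes positive conditional mass to the ``$d^{0}>0$'' subset of its own fiber, contradicting the fiber-level conclusion), so the exceptional states form a finite union of $\mu$-null singletons and are $\mu$-null. Combining the two inclusions with the $\beta=0$ case gives the proposition for all $\beta\geq0$. The point I would be most careful about — and the one I'd expect a referee to probe — is that a ``mixed'' fiber, on which $\pi$ agrees with $\pi_f$ on a positive-mass subset and disagrees on another, does not produce a positive-measure family of counterexamples; the atomic argument above handles this, and it is worth noting that when $\pi_f$ is an imitation optimizer, Proposition~\ref{prop:policy-averaging} makes every fiber ``pure'' ($\pi$ is either $\mu$-a.e.\ constant on it, forcing $d^{0}\equiv0$ there, or non-constant, forcing $d^{0}>0$ on a positive-mass subset), which is the cleanest way to see the pathology cannot occur.
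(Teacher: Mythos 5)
Your write-up is necessarily a different route from the paper's, because the paper's entire proof is the single sentence that the claim ``follows trivially'' from the nonnegativity and separation of $d$ (together with the definition of $\pi^{\text{IL}}$, which hints the authors really have $\pi=\piexp$, $\pi_f=\pi^{\text{IL}}_f$ in mind even though the statement is phrased for general $\pi,\pi_f$). Your $\beta=0$ case and your fiber-level characterization --- $d^{\beta}_{\mu,\pi}(\pi_f)(s)=0$ iff $d^{0}_{\pi}(\pi_f)=0$ holds $P_\mu(\cdot\mid f(S)=f(s))$-a.e., for $\beta\in(0,\infty]$ --- are correct and make explicit what the paper leaves implicit. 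But there is a genuine gap in the pointwise step: both of your arguments, the tower-property one and the ``atomic'' one you label as the converse, bound the \emph{same} exceptional set $\{d^{0}_{\pi}(\pi_f)(S)>0\}\cap\{d^{\beta}_{\mu,\pi}(\pi_f)(S)=0\}$, i.e.\ they both prove the implication $\pi(s)\neq\pi_f(f(s))\Rightarrow d^{\beta}_{\mu,\pi}(\pi_f)(s)>0$. The actual converse, $d^{\beta}_{\mu,\pi}(\pi_f)(s)>0\Rightarrow\pi(s)\neq\pi_f(f(s))$, concerns the other set $\{d^{0}_{\pi}(\pi_f)(S)=0\}\cap\{d^{\beta}_{\mu,\pi}(\pi_f)(S)>0\}$, and that is exactly where the ``mixed fiber'' you flag at the end lives: if a fiber $f^{-1}(o)$ contains positive-mass states $s_1,s_2$ with $\pi(s_1)=\pi_f(o)$ but $\pi(s_2)\neq\pi_f(o)$, then $d^{\beta}_{\mu,\pi}(\pi_f)(s_1)>0$ while $d^{0}_{\pi}(\pi_f)(s_1)=0$, so $s_1$ is a positive-mass counterexample. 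Your atomic argument never touches this set, so the assertion that it ``handles'' the mixed-fiber pathology is wrong.

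Nor does the appeal to Proposition~\ref{prop:policy-averaging} close the gap: taking $\pi_f=\pi^{\text{IL}}_f$ makes $\pi_f(o)$ the conditional average of $\piexp$ over the fiber, and a non-constant $\piexp$ can coincide with its own average at a positive-mass state (three equal-mass states with $\piexp$ values $(0.5,0.5)$, $(0.4,0.6)$, $(0.6,0.4)$ give $\pi^{\text{IL}}_f(o)=(0.5,0.5)=\piexp(s_1)$), so fibers need not be ``pure'' in your sense: non-constancy forces $d^0>0$ on a positive-mass \emph{subset}, not a.e. What your argument honestly establishes for $\beta>0$ is the forward implication pointwise a.s., plus the fiber-level equivalence ``$d^{\beta}>0$ iff $\pi\neq\pi_f\circ f$ on a set of positive conditional mass in the fiber''; the stated pointwise biconditional needs an extra hypothesis (e.g.\ that disagreement on a fiber is all-or-nothing). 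The paper's ``trivial'' proof does not confront this either --- it is genuinely correct only for $\beta=0$, the case actually used in ADVISOR --- so you have in effect located a gap in the proposition itself rather than repaired it.
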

\begin{proof}
This statement follows trivially from the definition of $\pi^{\text{IL}}$ and the fact that $d(\pi, \pi') \geq 0$ with $d(\pi, \pi')=0$ if and only if $\pi=\pi'$.
\end{proof}

The above proposition shows that any $d^{\beta}$ can be used to consistently detect differences between $\piexp$ and $\pi^{\text{IL}}_f$, \ie, it can be used to detect the imitation gap. Notice also that for any $\beta>0$ we have that $d^{\beta}_{\mu,\piexp}(\pi^{\text{IL}}_{f})(s)=d^{\beta}_{\mu,\piexp}(\pi^{\text{IL}}_{f})(s')$ whenever $f(s)=f(s')$.

As an alternative to using $d^0$, we now describe how $d^{1}_{\mu,\piexp}(\pi^{\text{IL}}_{f})(s)$ can be estimated in practice during training. Let $\pi^{\text{aux}}_{f}$ be an estimator of $\pi^{\text{IL}}_{f}$ as usual. To estimate $d^{1}_{\mu,\piexp}(\pi^{\text{IL}}_{f})(s)$ we assume we have access to a function approximator $g_{\psi}:\cO_f\to \bR$ parameterized by $\psi\in\Psi$, \eg, a neural network. Then we estimate $d^{1}_{\mu,\piexp}(\pi^{\text{IL}}_{f})(s)$ with $g_{\hat{\psi}}$ where \smash{$\hat{\psi}$} is taken to be the minimizer of the loss
\begin{align}
    \cL_{\mu,\piexp,\pi^{\text{aux}}_{f}}(\psi) = E_{\mu}\Big[\Big(d(\piexp(S), \pi^{\text{aux}}_{f}(f(S))) - g_{\psi}(f(S))\Big)^2\Big].
\end{align}
The following proposition then shows that, assuming that $d^{1}_{\mu,\piexp}(\pi^{\text{aux}}_{f})\in \{g_{\psi}\mid \psi\in\Psi\}$, $g_{\hat{\psi}}$ will equal $d^{1}_{\mu,\piexp}(\pi^{\text{aux}}_{f})$ and thus $g_{\hat{\psi}}$ may be interpreted as a plug-in estimator of $d^{1}_{\mu,\piexp}(\pi^{\text{IL}}_{f})$.

\begin{prop}\label{prop:optimizing-the-right-loss}
For any $\psi\in\Psi$, 
\begin{align*}
    \cL_{\mu,\piexp,\pi^{\text{aux}}_{f}}(\psi) &= E_{\mu}[(d^{1}_{\mu,\piexp}(\pi^{\text{aux}}_{f})(S) - g_{\psi}(f(S)))^2] + c,
\end{align*}
where $c=E_{\mu}[(d(\piexp(S), \pi^{\text{aux}}(f(S))) - d^{1}_{\mu,\piexp,\hat{\pi}}(S))^2]$ is constant in $\psi$ and this implies that if $d^{1}_{\mu,\piexp}(\pi^{\text{aux}}_{f})\in \{g_{\psi}\mid\psi \in\Psi\}$ then $g_{\hat{\psi}} = d^{1}_{\mu,\piexp}(\pi^{\text{aux}}_{f})$.
\end{prop}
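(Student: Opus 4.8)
The plan is to recognize the claimed identity as the familiar ``Pythagorean'' (bias--variance) decomposition for the $L^2(\mu)$ projection onto $\sigma(f(S))$-measurable functions. First I would set $Z = d(\piexp(S), \pi^{\text{aux}}_{f}(f(S)))$, the random variable appearing inside the square in $\cL_{\mu,\piexp,\pi^{\text{aux}}_{f}}(\psi)$, and observe that, straight from the definition of $d^{1}$ in the previous proposition, $d^{1}_{\mu,\piexp}(\pi^{\text{aux}}_{f})(S) = E_\mu[Z \mid f(S)]$; in particular it is a function of $f(S)$, hence $\sigma(f(S))$-measurable, and so is $g_\psi(f(S))$ for every $\psi\in\Psi$.

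Next I would add and subtract this conditional expectation inside the square and expand:
\begin{align*}
 E_\mu[(Z - g_\psi(f(S)))^2] ={}& E_\mu[(Z - E_\mu[Z\mid f(S)])^2] + E_\mu[(E_\mu[Z\mid f(S)] - g_\psi(f(S)))^2] \\
 &{}+ 2\,E_\mu\big[(Z - E_\mu[Z\mid f(S)])\,(E_\mu[Z\mid f(S)] - g_\psi(f(S)))\big].
\end{align*}
The crux is that the cross term vanishes: conditioning on $f(S)$ and applying the tower property, the factor $E_\mu[Z\mid f(S)] - g_\psi(f(S))$ is $\sigma(f(S))$-measurable and pulls out of the inner conditional expectation, leaving $E_\mu[\,Z - E_\mu[Z\mid f(S)] \mid f(S)\,] = 0$. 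Therefore $\cL_{\mu,\piexp,\pi^{\text{aux}}_{f}}(\psi) = E_\mu[(d^{1}_{\mu,\piexp}(\pi^{\text{aux}}_{f})(S) - g_\psi(f(S)))^2] + c$ with $c = E_\mu[(Z - E_\mu[Z\mid f(S)])^2]$, which is exactly the constant in the statement and manifestly does not depend on $\psi$.

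For the second assertion I would note that the remaining $\psi$-dependent term $E_\mu[(d^{1}_{\mu,\piexp}(\pi^{\text{aux}}_{f})(S) - g_\psi(f(S)))^2]$ is nonnegative and equals $0$ precisely when $g_\psi\circ f = d^{1}_{\mu,\piexp}(\pi^{\text{aux}}_{f})$ ($\mu$-almost surely). Hence, under the hypothesis $d^{1}_{\mu,\piexp}(\pi^{\text{aux}}_{f}) \in \{g_\psi \mid \psi\in\Psi\}$, a minimizer $\hat\psi$ of $\cL_{\mu,\piexp,\pi^{\text{aux}}_{f}}$ must drive that term to $0$, so $g_{\hat\psi} = d^{1}_{\mu,\piexp}(\pi^{\text{aux}}_{f})$ ($\mu$-a.s.), as claimed.

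I do not expect a genuine obstacle: the entire argument is just orthogonality of the conditional-expectation projection in $L^2(\mu)$. The only care needed is bookkeeping --- writing the conditioning/tower-property step cleanly (using that $f(S)$ generates the $\sigma$-algebra in question and that $d^{1}$ is by construction a function of $f(S)$), and phrasing the final identification of $g_{\hat\psi}$ with the appropriate ``$\mu$-almost surely'' and ``minimizer attained'' qualifications.
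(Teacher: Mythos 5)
Your proposal is correct and follows essentially the same route as the paper's proof: both add and subtract the conditional expectation $d^{1}_{\mu,\piexp}(\pi^{\text{aux}}_{f})(S)=E_\mu[d(\piexp(S),\pi^{\text{aux}}_{f}(f(S)))\mid f(S)]$, expand the square, and kill the cross term by conditioning on $f(S)$ and using that the $g_\psi(f(S))$-dependent factor is $\sigma(f(S))$-measurable. Your explicit framing as the $L^2(\mu)$ Pythagorean decomposition and your spelled-out final step identifying $g_{\hat\psi}$ are just cleaner packaging of the same argument.
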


\begin{proof}
In the following we let $O_f = f(S)$. We now have that
\begin{align*}
    E_{\mu}&[\big(d(\piexp(S), \pi^{\text{aux}}_{f}(O_f)) - g_{\psi}(O_f)\big)^2] \\
    &= E_{\mu}[\big( (d(\piexp(S), \pi^{\text{aux}}_{f}(O_f)) - d^{1}_{\mu,\piexp}(\pi^{\text{aux}}_{f})(S)) + (d^{1}_{\mu,\piexp}(\pi^{\text{aux}}_{f})(S) -g_{\psi}(O_f)) \big)^2] \\
    &= E_{\mu}[(d(\piexp(S), \pi^{\text{aux}}_{f}(O_f)) - d^{1}_{\mu,\piexp}(\pi^{\text{aux}}_{f})(S))^2] + E_\mu[(d^{1}_{\mu,\piexp}(\pi^{\text{aux}}_{f})(S) -g_{\psi}(O_f)))^2] \\
    & \quad + 2\cdot E_\mu[((d(\piexp(S), \pi^{\text{aux}}_{f}(O_f)) - d^{1}_{\mu,\piexp}(\pi^{\text{aux}}_{f})(S))\cdot (d^{1}_{\mu,\piexp}(\pi^{\text{aux}}_{f})(S) -g_{\psi}(O_f)))] \\
    &= c + E_\mu[(d^{1}_{\mu,\piexp}(\pi^{\text{aux}}_{f})(S) -g_{\psi}(O_f)))^2] \\
    & \quad + 2\cdot E_\mu[((d(\piexp(S), \pi^{\text{aux}}_{f}(O_f)) - d^{1}_{\mu,\piexp}(\pi^{\text{aux}}_{f})(S))\cdot (d^{1}_{\mu,\piexp}(\pi^{\text{aux}}_{f})(S) -g_{\psi}(O_f)))].
\end{align*}
Now as as $d^{1}_{\mu,\piexp}(\pi^{\text{aux}}_{f})(s) = d^{1}_{\mu,\piexp}(\pi^{\text{aux}}_{f})(s')$ for any $s,s'$ with $f(s)=f(s')$ we have that $d^{1}_{\mu,\piexp}(\pi^{\text{aux}}_{f})(S) -g_{\psi}(O_f)$ is constant conditional on $O_f$ and thus
\begin{align*}
    E_\mu&[(d(\piexp(S), \pi^{\text{aux}}_{f}(O_f)) - d^{1}_{\mu,\piexp}(\pi^{\text{aux}}_{f})(S))\cdot (d^{1}_{\mu,\piexp}(\pi^{\text{aux}}_{f})(S) -g_{\psi}(O_f)) \mid O_f] \\
    &= E_\mu[(d(\piexp(S), \pi^{\text{aux}}_{f}(O_f)) - d^{1}_{\mu,\piexp}(\pi^{\text{aux}}_{f})(S)\mid O_f] \cdot E_\mu[d^{1}_{\mu,\piexp}(\pi^{\text{aux}}_{f})(S) -g_{\psi}(O_f) \mid O_f] \\
    &= E_\mu[d^{1}_{\mu,\piexp}(\pi^{\text{aux}}_{f})(S) - d^{1}_{\mu,\piexp}(\pi^{\text{aux}}_{f})(S)\mid O_f] \cdot E_\mu[d^{1}_{\mu,\piexp}(\pi^{\text{aux}}_{f})(S) -g_{\psi}(O_f) \mid O_f] \\
    &= 0.
\end{align*}
Combining the above results and using the law of iterated expectations  gives the desired result.
\end{proof}

\begin{figure}
    \centering
    {
      \phantomsubcaption\label{fig:distance-demo-1}
      \phantomsubcaption\label{fig:distance-demo-2}
      \phantomsubcaption\label{fig:distance-demo-3}
      \phantomsubcaption\label{fig:distance-demo-4}
    }
    \includegraphics[width=\textwidth]{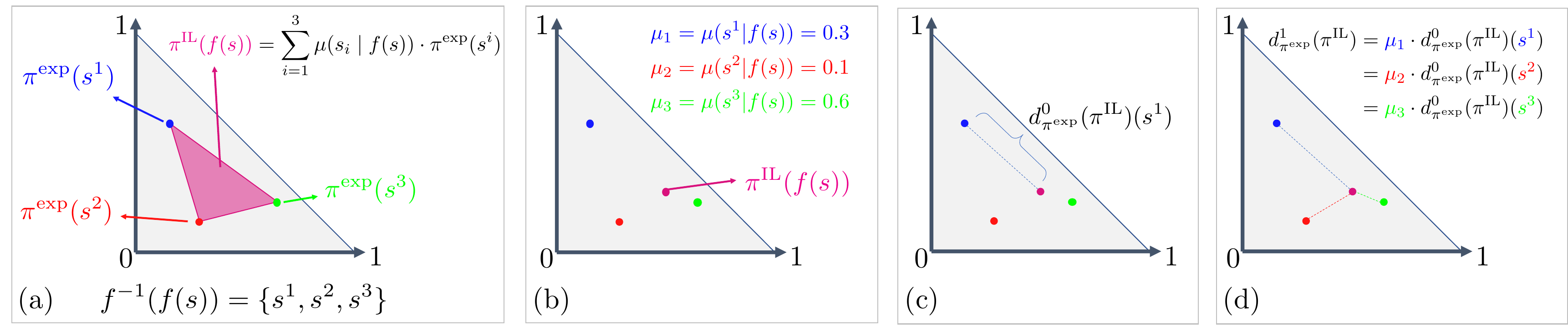}

\caption{\textbf{Concept Illustration.} Here we illustrate several of the concepts from our paper. Suppose our action space $\cA$ contains three elements. Then for any $s\in\cS$ and policy $\pi$, the value $\pi(s)$ can be represented as a single point in the 2-dimensional probability simplex $\{(x,y)\in\bR^2\mid x\geq 0, y\geq 0, x+y \leq 1\}$ shown as the grey area in (a). Suppose that the fiber $f^{-1}(f)$ contains the three unique states $s^1,s^2,$ and $s^3$. In (a) we show the hypothetical values of $\pi^{\text{exp}}$ when evaluated at these points. Proposition~\ref{prop:policy-averaging} says that $\pi^{\text{IL}}(s)$ lies in the convex hull of $\{\piexp(s^i)\}_{i=1}^3$ visualized as a magenta triangle in (a). Exactly where $\pi^{\text{IL}}(s)$ lies depends on the probability measure $\mu$, in (b) we show how a particular instantiation of $\mu$ may result in a realization of $\pi^{\text{IL}}(s)$ (not to scale). (c) shows how $d^1_{\piexp}$ measures the distance between $\piexp(s^1)$ and $\pi^{\text{IL}}(s^1)$. Notice that it ignores $s^2$ and $s^3$. In (d), we illustrate how $d^0_{\piexp}$ produces a ``smoothed'' measure of distance incorporating information about all $s^i$.
}
    \label{fig:distance-demo}
\end{figure}

\subsection{Future Directions in Improving Distance Estimators} \label{sec:future-work-improving-plugins}

In this section we highlight possible directions towards improving the estimation of $d^{0}_{\piexp}(\pi^{\text{IL}}_{f})(s)$ for $s\in\cS$. As a comprehensive study of these directions is beyond the scope of this work, our aim in this section is intuition over formality. We will focus on $d^0$ here but similar ideas can be extended to other distance measures, \eg, those in Sec.~\ref{sec:other-distances}.

As discussed in the main paper, we estimate $d^{0}_{\piexp}(\pi^{\text{IL}}_{f})(s)$ by first estimating $\pi^{\text{IL}}_{f}$ with $\pi^{\text{aux}}_{f}$ and then forming the ``plug-in'' estimator $d^{0}_{\piexp}(\pi^{\text{aux}}_{f})(s)$. For brevity, we will write $d^{0}_{\piexp}(\pi^{\text{aux}}_{f})(s)$ as $\smhat{d}$. While such plug-in estimators are easy to estimate and conceptually compelling, they need not be statistically efficient. Intuitively, the reason for this behavior is because we are spending too much effort in trying to create a high quality estimate $\pi^{\text{aux}}_{f}$ of $\pi^{\text{IL}}_{f}$ when we should be willing to sacrifice some of this quality in service of obtaining a better estimate of $d^{0}_{\piexp}(\pi^{\text{IL}}_{f})(s)$. Very general work in this area has brought about the targeted maximum-likelihood estimation (TMLE) \citep{van2016one} framework. Similar ideas may be fruitful in improving our estimator $\smhat{d}$.

Another weakness of $\smhat{d}$ discussed in the main paper is that is not prospective. In  the main paper we assume, for readability,  that we have trained the estimator $\pi^{\text{aux}}_{f}$ before we train our main policy. In practice, we train $\pi^{\text{aux}}_{f}$ alongside our main policy. Thus the quality of $\pi^{\text{aux}}_{f}$ will improve throughout training. To clarify, suppose that, for $t\in[0,1]$, $\pi^{\text{aux}}_{f, t}$ is our estimate of $\pi^{\text{IL}}_{f}$ after $(100\cdot t) \%$ of training has completed. Now suppose that $(100\cdot t) \%$ of training has completed and we wish to update our main policy using the ADVISOR loss given in  \equref{eq:advisor-loss}. In our current approach we estimate $d^{0}_{\piexp}(\pi^{\text{IL}}_{f})(s)$ using $d^{0}_{\piexp}(\pi^{\text{aux}}_{f,t})(s)$ when, ideally, we would prefer to use $d^{0}_{\piexp}(\pi^{\text{aux}}_{f,1})(s)$ from the end of training. Of course we will not know the value of $d^{0}_{\piexp}(\pi^{\text{aux}}_{f,1})(s)$ until the end of training but we can, in principle, use time-series methods to estimate it. To this end, let $q_\omega$ be a time-series model with parameters $\omega\in\Omega$ (\eg, $q_\omega$ might be a recurrent neural network) and suppose that we have stored the model checkpoints $(\pi^{\text{aux}}_{f,i/K} \mid i/K \leq t)$. We can then train $q_\omega$ to perform forward prediction, for instance to minimize
\begin{align*}
    \sum_{j=1}^{\lfloor t \cdot K\rfloor} \Big(d^{0}_{\piexp}(\pi^{\text{aux}}_{f,j/K})(s) - q_\omega(s, (\pi^{\text{aux}}_{f,i/K}(s))_{i=1}^{j-1})\Big)^2\;,
\end{align*}
and then use this trained $q_\omega$ to predict the value of $d^{0}_{\piexp}(\pi^{\text{aux}}_{f,1})(s)$. The advantage of this prospective estimator $q_\omega$ is that it can detect that the auxiliary policy will eventually succeed in exactly imitating the expert in a given state and thus allow for supervising the main policy with the expert cross entropy loss earlier in training. The downside of such a method: it is significantly more complicated to implement and requires running inference using saved model checkpoints.

\subsection{Additional Task Details}\label{sec:additional-task-details}

In \Cref{fig:tasks} we gave a quick qualitative glimpse at the various tasks we use in our experiments. Here, we provide additional details for each of them along with information about observation space associated with each task. For training details for the tasks, please see \secref{sec:training-imp-details}. Our experiments were primarily run using the AllenAct learning framework~\cite{AllenAct}, see \url{AllenAct.org} for details and tutorials.

\subsubsection{PoisonedDoors (\pd)}\label{sec:poisoned-doors-task-description-full} This environment is a reproduction of our example from~\secref{sec:introduction}. An agent is presented with $N=4$ doors $d_1,\dots,d_4$. Door $d_1$ is locked, requiring a fixed $\{0,1,2\}^{10}$ code to open, but always results in a reward of 1 when opened. For some randomly chosen $j\in\{2,3,4\}$, opening door $d_j$ results in a reward of $2$ and for $i\not\in\{1,j\}$, opening door $d_i$ results in a reward of $-2$. The agent must first choose a door after which, if it has chosen door 1, it must enter the combination (receiving a reward of 0 if it enters the incorrect combination) and, otherwise, the agent immediately receives its reward. See Fig.~\ref{fig:babyai-tasks-poisoned-doors}. 

\subsubsection{2D-Lighthouse (\twodimlh)}

\begin{wrapfigure}[16]{r}{0.35\linewidth}
    \centering
    \includegraphics[width=0.9\linewidth]{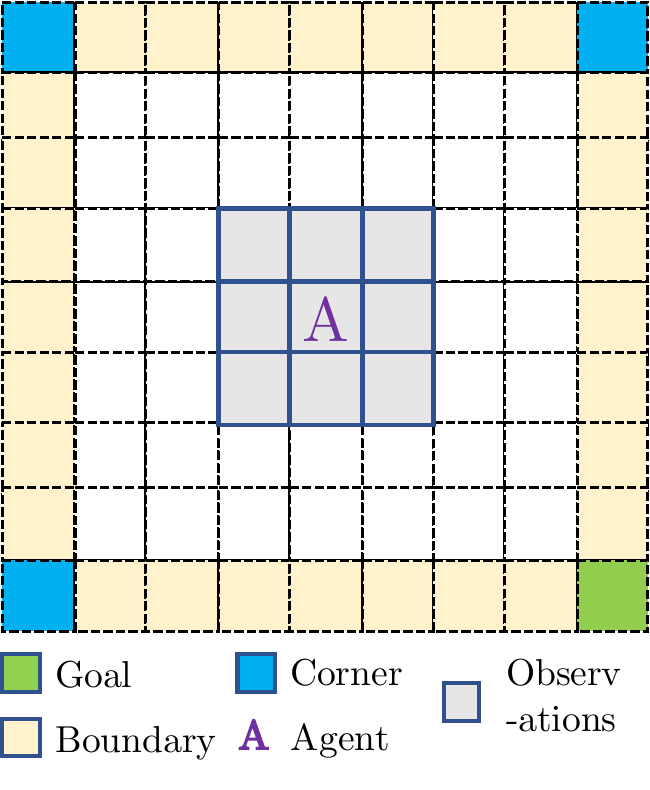}
    \caption{\textsc{2D-Lighthouse}}
    \label{fig:babyai-tasks-lighthouse}
\end{wrapfigure}

2D variant of the exemplar grid-world task introduced in Ex.~\ref{ex:lighthouse}, aimed to empirically verify our analysis of the imitation gap. A reward awaits at a randomly chosen corner of a square grid of size $2N+1$ and the agent can only see the local region, a square of size $2i+1$ about itself (an $f^i$-restricted observation). Additionally, all $f^i$ allow the agent access to it's previous action. As explained in Ex.~\ref{ex:lighthouse}, we experiment with optimizing $f^{i}$-policies when given supervision from $f^j$-optimal teachers (\ie, experts that are optimal when restricted to $f^j$-restricted observations). See Fig.~\ref{fig:babyai-tasks-lighthouse} for an illustration.

\subsubsection{LavaCrossing (LC)} Initialized on the top-left corner the agent must navigate to the bottom-right goal location. There exists at least one path from start to end, navigating through obstacles. Refer to~\figref{fig:tasks} where, for illustration, we show a simpler grid. Here the episode terminates if the agent steps on any of the \texttt{lava} obstacles. This \textsc{LC} environment has size $25\times 25$ with $10$ \texttt{lava} rivers (`S25, N10' as per the notation of~\citep{gym_minigrid}), which are placed vertically or horizontally across the grid. The expert is a shortest path agent with access to the entire environment's connectivity graph and is implemented via the \texttt{networkx} python library.

\subsubsection{WallCrossing (WC)} Similar to \textsc{LavaCrossing} in structure and expert, except that obstacles are \texttt{walls} instead of \texttt{lava}. Unlike \texttt{lava} (which immediately kills the agent upon touching), the agent may run into \texttt{walls} without consequence (other than wasting time). Our environment is of size $25\times 25$ with $10$ \texttt{walls} (`S25, N10').

\subsubsection{WC/LC Switch}
In this task the agent faces a more challenging filtration function. In addition to navigational actions, agents for this task have a `switch' action. Using this switch action, the agents can switch-on the lights of an otherwise darkened environment which is implemented as an observation tensor of all zeros. In \textsc{WC}, even in the dark, an agent can reach the target by taking random actions with non-negligible probability. Achieving this in \textsc{LC} is nearly impossible as random actions will, with high probability, result in stepping into \texttt{lava} and thereby immediately end the episode.

We experiment with two variants of this `switch' -- \textsc{Once} and \textsc{Faulty}. In the \textsc{Once Switch} variant, once the the `switch' action is taken, the lights remain on for the remainder of the episode. This is implemented as the unaffected observation tensor being available to the agent. In contrast, in the \textsc{Faulty Switch} variant, taking the `switch' action will only turn the lights on for a single timestep. This is implemented as observations being available for one timestep followed by zero tensors (unless the `switch' action is executed again). 

The expert for these tasks is the same as for \textsc{WC} and \textsc{LC}. Namely, the expert always takes actions along the shortest path from the agents current position to the goal and is unaffected by whether the light is on or off. For the expert-policy-based methods this translates to the learner agent getting perfect (navigational) supervision while struggling in the dark, with no cue for trying the switch action. For the expert-demonstrations-based methods this translates to the demonstrations being populated with blacked-out observations paired with perfect actions: such actions are, of course, difficult to imitate. As \textsc{Faulty} is more difficult than \textsc{Once} (and \textsc{LC} more difficult than \textsc{WC}) we set grid sizes to reduce the difference in difficulty between tasks. In particular, we choose to set \textsc{WC Once Switch} on a (S25, N10) grid and the \textsc{LC Once Switch} on a (S15, N7) grid. Moreover, \textsc{WC Faulty Switch} is set with a (S15, N7) grid and \textsc{LC Faulty Switch} with a (S9, N4) grid.

\subsubsection{WC/LC Corrupt} In the \textsc{Switch} task, we study agents with observations affected by a challenging filtration function. In this task we experiment with corrupting the expert's actions. The expert policy flips over to a random policy when the expert is $N_C$ steps away from the goal. For the expert-policy-based method this translates to the expert outputting uniformly random actions once it is within $N_C$ steps from the target. For the expert-demonstrations-based methods this translates to the demonstrations consisting of some valid (observation, expert action) tuples, while the tuples close to the target have the expert action sampled from a uniform distribution over the action space. \textsc{WC Corrupt} is a (S25, N10) grid with $N_C = 15$, while the \textsc{LC Corrupt} is significantly harder, hence is a (S15, N7) grid with $N_C = 10$.

\subsubsection{PointGoal Navigation}\label{sec:pointnav-task} In PointGoal Navigation, a randomly spawned agent must navigate to a goal specified by a relative-displacement vector. The observation space is composed of rich egocentric RGB observations ($256{\times}256{\times}3$) with a limited field of view. The action space is 
$\{$\texttt{move\_ahead}, \texttt{rotate\_right}, \texttt{rotate\_left}, \texttt{stop}$\}$.
The task was formulated by \cite{anderson2018evaluation} and implemented for the \textsc{AIHabitat} simulator by \cite{habitat19iccv}. Our reward structure, train/val/test splits, PointNav dataset, and implementation follow \cite{habitat19iccv}. 
RL agents are trained using PPO following authors' implementation\footnote{https://github.com/facebookresearch/habitat-lab}. The IL agent is trained with on-policy behavior cloning using the shortest-path action. A static combination of the PPO and BC losses (\ie a simple sum of the PPO loss and IL cross entropy loss) is also used a competing baseline for ADVISOR. Note that the agent observes a filtered egocentric observation while the shortest-path action is inferred from the entire environment state leading to a significant imitation gap. We train on the standard Gibson set of 76 scenes, and report metrics as an average over the val.\ set consisting of 14 unseen scenes in \textsc{AIHabitat}. We use a budget of 50 million frames, \ie, ${\sim}$2 days of training on  4 NVIDIA TitanX GPUs, and 28 CPUs for each method.

\subsubsection{ObjectGoal Navigation}\label{sec:objectnav-task} In ObjectGoal Navigation within the RoboTHOR environment, a randomly spawned agent must navigate to a goal specified by an object category. In particular, the agent must search it's environment to find an object of the given category and take a 
\texttt{stop} action (which ends the episode regardless of success) when that object is within 1m of the agent and visible. The observation space is composed of rich egocentric RGB observations ($300{\times}400{\times}3$) with a limited field of view. The action space is 
$\{$\texttt{move\_ahead}, \texttt{rotate\_right}, \texttt{rotate\_left}, \texttt{look\_up}, ,\texttt{look\_down}, \texttt{stop}$\}$.
The \onav task within the RoboTHOR environment was proposed by \cite{DeitkeEtAl2020}, we use the version of this task corresponding to the 2021 RoboTHOR ObjectNav Challenge\footnote{https://ai2thor.allenai.org/robothor/cvpr-2021-challenge} and use this challenge's reward structure, dataset, train/val/test splits, and their baseline model architecture. 
This challenge provides implementations of PPO and DAgger where the DAgger agent is trained with supervision coming from a shortest-path expert. We implement our ADVISOR methodology (with no teacher forcing) as well as a baseline where we simply sum PPO and IL losses. We use a budget of 100 million frames, \ie, ${\sim}$2-5 days of training, 8 NVIDIA TitanX GPUs, and 56 CPUs for each method. At every update step we use 60 rollouts of length 128 and perform 4 gradient steps with the rollout.

\subsubsection{Cooperative Navigation}
\label{sec:coopnav-task}
In Cooperative Navigation, there are three agents and three landmarks. The goal of the three agents is to cover the three landmarks. 
Agents are encouraged to move toward uncovered landmarks and get penalized when they collide with each other. Agents have limited visibility range. The agents can only observe other agents and landmarks within its visibility range (euclidean distance to the agent). 
The action space has five dimensions. The first dimension is no-op, and the other four dimensions represent the forward, backward, left, and right  force applied to the agent. The RL agents are trained with MADDPG~\cite{LoweNIPS2017} with a permutation invariant critic~\cite{LiuCORL2019}. The IL agents are trained using DAgger. The experts are pre-trained RL agents with no limits to their visibility range. Following~\cite{LoweNIPS2017, LiuCORL2019}, we use a budge of $1.5$ million environment steps. We use one NVIDIA GTX1080 and 2 CPUs to train these agents.

\subsubsection{Observation spaces} \label{sec:appendix-obs-space}

\noindent \textbf{\twodimlh.} Within our \twodimlh environment we wish to train our agent in the context of Proposition \ref{prop:policy-averaging} so that the agent may learn any $f$-restricted policy. As the \twodimlh environment is quite simple, we are able to uniquely encode the state observed by an agent using a $4^4\cdot 5^2=6400$ dimensional $\{0,1\}$-valued vector such that any $f$-restricted policy can be represented as a linear function applied to this observation (followed by a soft-max).\footnote{As the softmax function prevents us from learning a truly deterministic policy we can only learn a policy arbitrarily close to such policies. In our setting, this distinction is irrelevant.}

\noindent \textbf{\pd.} Within the \pd environment the agent's observed state is very simple: at every timestep the agent observes an element of $\{0,1,2,3\}$ with 0 denoting that no door has yet been chosen, 1 denoting that the agent has chosen door $d_1$ but has not begun entering the code, $2$ indicating that the agent has chosen door $d_1$ and has started entering the code, and $3$ representing the final terminal state after a door has been opened or combination incorrectly entered.

\noindent \textbf{\minigrid.} The \minigrid environments~\citep{gym_minigrid} enable agents with an egocentric ``visual'' observation which, in practice, is an integer tensor of shape $7\times7\times3$, where the channels contain integer labels corresponding to the cell's type, color, and state. Kindly see~\citep{gym_minigrid,chevalier2018babyai} for details. For the above tasks, the cell types belong to the set of (\texttt{empty, lava, wall, goal}).

\noindent \textbf{\textsc{PointNav}.} Agents in the \pnav task observe, at every step, egocentric RGB observations ($256{\times}256{\times}3$) of their environment along with a relative displacement vector towards the goal (\ie a 2d vector specifying the location of the goal relative the goal). See \Cref{fig:tasks} for an example of one such egocentric RGB image.

\noindent \textbf{\textsc{ObjectNav}.} Agents in the \onav task observe, at every step, egocentric RGB observations ($300{\times}400{\times}3$) of their environment along with an object category (\eg ``BaseballBat'') specifying their goal. See \Cref{fig:tasks} for an example of one such egocentric RGB image. Note that agents in the \textsc{ObjectNav} task are generally also allowed access to egocentric depth frames, we do not use these depth frames in our experiments as their use slows simulation speed.

\noindent \textsc{CoopNav}.
At each step, each agent in \textsc{CoopNav} task observes a 14-dimensional vector, which contains the absolute location and speed of itself, the relative locations to the three landmarks, and the relative location to other two agents.  

\subsection{ADVISOR can outperform BC in the no-imitation-gap setting} \label{sec:lighthouse-efficiency-study}

Recall the setting of our \twodimlh experiments in Section \ref{sec:results} where we train $f^i$-restricted policies (\ie, an agent that can see $i$ grid locations away) using $f^j$-optimal teachers. In particular, we train 25 policies on each $i,j$ pair where for $1\leq i\leq j\leq 15$ and $i,j$ are both odd. Each trained policy is then evaluated on 200 random episodes and we record average performance across various metrics across these episodes. Complementing Fig. \ref{fig:lh-plots} from the main paper, Fig. \ref{fig:app-lh-plots} shows the box plots of the trained policies average episode lengths, lower being better, when training with BC, BC$\to$ PPO, ADVISOR, and PPO (PPO does not use expert supervision so we simply report the performance of PPO trained $f^i$-restricted policies for each $i$).

As might be expected: ADVISOR has consistently low episode lengths across all $i,j$ pairs suggesting that ADVISOR is able to mitigate the impact of the imitation gap. One question that is not well-answered by Fig. \ref{fig:app-lh-plots} is that of the relative performance of ADVISOR and IL methods when \emph{there is no imitation gap}, namely the $i=j$ case. As ADVISOR requires the training of an auxiliary policy in addition (but, in parallel) to a main policy, we test the sample efficiency of ADVISOR head-on with IL methods.
Table \ref{tab:lighthouse-efficiency} records the percentage of runs in which ADV, BC, and $\dagger$ attain near optimal (within 5\%) performance when trained in the no-imitation-gap setting (\ie $i=j$) for different grid visibility $i$. 
We find that only ADVISOR consistently reaches near-optimal performance within the budget of 300{,}000 training steps. We suspect that this is due to the RL loss encouraging early exploration that results in the agent more frequently entering states where imitation learning is easier. This interpretation is supported by the observation that ADV, BC, and $\dagger$ all consistently reach near-optimal performance when $i$ is very small (almost all states look identical so exploration can be of little help) and when $i$ is quite large (the agent can see nearly the whole environment so there is no need to explore). While we do no expect this trend to hold in all cases, indeed there are likely many cases where pure-IL is more effective than ADV in the no-imitation-gap setting, it is encouraging to see that ADV can bring benefits even when there is no imitation gap.

\begin{table}[t]
\footnotesize
\centering
\begin{tabular}{ccccccccc}
\toprule
\textbf{Method}           & \multicolumn{8}{c}{\textbf{\% converged to near optimal performance}} \\
           & $i=1$    & 3    & 5    & 7    & 9    & 11   & 13   & 15 \\\midrule
ADV & 1    & 1    & 1    & 1    & 1    & 1    & 1    & 1  \\
BC           & 1    & 0.72 & 0.52 & 0.72 & 0.68 & 0.84 & 0.96 & 1  \\
$\dagger$       & 0.88 & 0.56 & 0.24 & 0.08 & 0.52 & 0.96 & 1    & 1 \\\bottomrule
\end{tabular}
\vspace{2mm}
\caption{\textbf{Comparing efficiency of IL \vs ADVISOR in \twodimlh}. Here we report the percentage of runs (of 25 runs per (method, $i$) pair) that various methods converged to near-optimal performance (within 5\% of optimal) with a budget of 300{,}000 training steps. Here $i$ corresponds to an $f^i$-restricted (student) policy trained with expert supervision from an $f^i$-optimal teacher (\ie the `no-imitation-gap' setting).}\label{tab:lighthouse-efficiency}
\end{table}

\subsection{Additional baseline details} \label{sec:additional-baseline-details}

\subsubsection{Baselines details for \twodimlh, \pd, and \minigrid tasks}

In~\tabref{tab:baseline_details}, we include details about the baselines considered in this work, including -- purely RL ($1$), purely IL ($2-4,9$), a sequential combination of IL/RL ($6-8$), static combinations of IL/RL ($5,10$), a method that uses expert demonstrations to generate rewards for reward-based RL (\ie GAIL, $11$), and our dynamic combinations ($12-15$). Our imitation learning baselines include those which learn from both expert policy (\ie an expert action is assumed available for any state) and expert demonstrations (offline dataset of pre-collected trajectories).

In our study of hyperparameter robustness (using the \pd and \minigrid tasks) the hyperparameters (hps) we consider for optimization have been chosen as those which, in preliminary experiments, had a substantial impact on model performance. This includes the learning rate (lr), portion of the training steps devoted to the first stage in methods with two stages (stage-split), and the temperature parameter in the ADVISOR weight function ($\alpha$).\footnote{See~\secref{sec:adaptive-method} for definition of the weight function for ADVISOR.} Note that, the random environment seed also acts as an implicit hyperparameter. We sample hyperparameters uniformly at random from various sets. In particular, we sample lr from $[10^{-4}, 0.5)$ on a log-scale, stage-split from $[0.1, 0.9)$, and $\alpha$ from $\{4, 8, 16, 32\}$.

\begin{savenotes}
\begin{table}[]
    \centering
    \begin{tabular}{lllll}
         \toprule
         \textbf{\#} & \textbf{Method} & \textbf{IL/RL} & \textbf{Expert supervision} & \textbf{Hps. searched}\\
         \midrule
         1 & $\text{PPO}$ & RL & Policy & lr\\
         2 & BC & IL & Policy & lr\\
         3 & $\dagger$ & IL & Policy & lr, stage-split\\
         4 & $\text{BC}^{\text{tf=1}}$ & IL & Policy\footnote{While implemented with supervision from expert policy, due to the teacher forcing being set to $1.0$, this method can never explore beyond states (and supervision) in expert demonstrations.} & lr\\
         5 & $\text{BC}+\text{PPO}$ & IL\&RL & Policy & lr\\
         6 & $\text{BC}\rightarrow\text{PPO}$ & IL$\rightarrow$RL & Policy & lr, stage-split\\
         7 & $\dagger\rightarrow\text{PPO}$ & IL$\rightarrow$RL & Policy & lr, stage-split\\
         8 & $\text{BC}^{\text{tf=1}}\rightarrow\text{PPO}$ & IL$\rightarrow$RL & Policy & lr, stage-split\\
         9 & $\text{BC}^{\text{demo}}$ & IL & Demonstrations & lr\\
         10 & $\text{BC}^{\text{demo}}+\text{PPO}$ & IL\&RL & Demonstrations & lr\\
         11 & GAIL & IL\&RL & Demonstrations & lr\\
         \midrule
         12 & $\text{ADV}$ & IL\&RL & Policy & lr, $\alpha$\\
         13 & $\dagger\rightarrow\text{ADV}$ & IL\&RL & Policy & lr, $\alpha$, stage-split\\
         14 & $\text{BC}^{\text{tf=1}}\rightarrow\text{ADV}$ & IL\&RL & Policy & lr, $\alpha$, stage-split\\
         15 & $\text{BC}^{\text{demo}}+\text{ADV}$ & IL\&RL & Demonstrations & lr, $\alpha$\\
         \bottomrule
    \end{tabular} \vspace{1mm}
    \caption{\textbf{Baseline details.} IL/RL: Nature of learning, Expert supervision: the type of expert supervision leveraged by each method, Hps.\ searched: hps.\ that were randomly searched over, fairly done with the same budget (see~\secref{sec:hyperparameter-tuning} for details).}
    \label{tab:baseline_details}
\end{table}
\end{savenotes}

In the below we give additional detailis regarding the GAIL and $\text{ADV}^{\text{demo}}+\text{PPO}$ methods.

\noindent\textbf{Generative adversarial imitation learning (GAIL).} For a comprehensive overview of GAIL, please see \cite{ho2016generative}. Our implementation closely follows that of Ilya Kostrikov \cite{KostrikovPytorchRL}. We found GAIL to be quite unstable without adopting several critical implementation details. In particular, we found it critical to (1) normalize rewards using a (momentum-based) running average of the standard deviation of past returns and (2) provide an extensive ``warmup'' period in which the discriminator network is pretrained. Because of the necessity of this ``warmup period'', our GAIL baseline observes more expert supervision and is given a budget of substantially more gradient steps than all other methods. Because of this, our comparison against GAIL \emph{disadvantages our ADVISOR method}. Despite this disadvantage, ADVISOR still outperforms.

\noindent\textbf{The $\text{ADV}^{\text{demo}}+\text{PPO}$ method.} As described in the main paper, the $\text{ADV}^{\text{demo}}+\text{PPO}$ method attempts to bring the benefits of our ADVISOR methodology to the setting where expert demonstrations are available but an expert policy (\ie, an expert that can be evaluated at arbitrary states) is not. Attempting to compute the ADVISOR loss (recall Eq.~\eqref{eq:advisor-loss}) on off-policy demonstrations is complicated however, as our RL loss assumes  access to on-policy demonstrations. In theory, importance sampling methods, see, \eg, \citep{MahmoodHasseltSutton2014}, can be used to ``reinterpret'' expert demonstrations as though they were on-policy. But such methods are known to be somewhat unstable, non-trivial to implement, and may require information about the expert policy that we do not have access to. For these reasons, we choose to use a simple solution: when computing the ADVISOR loss on expert demonstrations we ignore the RL loss. Thus $\text{ADV}^{\text{demo}}+\text{PPO}$ works by looping between two phases:
\begin{itemize}
    \item Collect an (on-policy) rollout using the agent's policy, compute the PPO loss for this rollout and perform gradient descent on this loss to update the parameters.
    \item Sample a rollout from the expert demonstrations and, using this rollout, compute the demonstration-based ADVISOR loss
    \begin{align}
        \cL^{\text{ADV-demo}}(\theta) = \mathbb{E}_{\text{demos.}}[w(S) \cdot CE(\piexp(S), \pi_f(S;\theta))],
    \end{align}
    and perform gradient descent on this loss to update the parameters.
\end{itemize}

\subsubsection{Baselines used in \pnav experiments}

Our \pnav baselines are described in \Cref{sec:coopnav-task}. See also Table \ref{tab:hyperparams-onav}.

\subsubsection{Baselines details for \onav experiments}

Our \onav baselines are described in \Cref{sec:objectnav-task}. See also Table \ref{tab:hyperparams-onav}.

\subsubsection{Baselines used in \cnav experiments}
Our \cnav baselines are described in \Cref{sec:coopnav-task}. We follow the implementation of \cite{LiuCORL2019}.
\subsection{Architecture Details}\label{sec:arch-details}

\noindent\textbf{\twodimlh model.} As discussed in Sec.~\ref{sec:appendix-obs-space}, we have designed the observation given to our agent so that a simple linear layer followed by a soft-max function is sufficient to capture any $f$-restricted policy. As such, our main and auxiliary actor models for this task are simply linear functions mapping the input 6400-dimensional observation to a 4-dimensional output vector followed by a soft-max non-linearity. The critic is computed similarly but with a 1-dimensional output and no non-linearity.

\noindent\textbf{\pd model.} Our \pd model has three sequential components. The first embedding layer maps a given observation, a value in $\{0,1,2,3\}$, to an 128-dimensional embedding. This 128-dimensional vector is then fed into a 1-layer LSTM (with a 128-dimensional hidden state) to produce an 128-output representation $h$. We then compute our main actor policy by applying a $128\times 7$ linear layer followed by a soft-max non-linearity. The auxiliary actor is produced similarly but with separate parameters in its linear layer. Finally the critic's value is generated by applying a $128\times 1$ linear layer to $h$.

\noindent\textbf{\textsc{MiniGrid} model.} Here we detail each component of the model architecture illustrated in~\figref{fig:model}. The encoder (`Enc.') converts observation tensors (integer tensor of shape $7\times7\times3$) to a corresponding embedding tensor via three embedding sets (of length $8$) corresponding to type, color, and state of the object. The observation tensor, which represents the `lights-out' condition, has a unique (\ie, different from the ones listed by~\citep{gym_minigrid})  type, color and state. This prevents any type, color or state from having more than one connotation. The output of the encoder is of size $7\times7\times24$. This tensor is flattened and fed into a (single-layered) LSTM with a 128-dimensional hidden space. The output of the LSTM is fed to the main actor, auxiliary actor, and the critic. All of these are single linear layers with output size of $|\mathcal{A}|$, $|\mathcal{A}|$ and $1$, respectively (main and auxiliary actors are followed by soft-max non-linearities). 

\noindent\textbf{\textsc{PointNav}, \textsc{ObjectNav}, and \textsc{CoopNav} model.}

For the \pnav \cite{habitat19iccv}, \onav \cite{DeitkeEtAl2020}, and \cnav \cite{LiuCORL2019} tasks, we (for fair comparison) use model architectures from prior work. For use with ADVISOR, these model architectures require an additional auxiliary policy head. We define this auxiliary policy head as a linear layer applied to the model's final hidden representation followed by a softmax non-linearity.

\subsection{Fair Hyperparameter Tuning}\label{sec:hyperparameter-tuning}

As discussed in the main paper, we attempt to ensuring that comparisons to baselines are fair. In particular, we hope to avoid introducing misleading bias in our results by extensively tuning the hyperparameters (hps) of our ADVISOR methodology while leaving other methods relatively un-tuned. 

\noindent\textbf{\twodimlh: Tune by Tuning a Competing Method.} The goal of our experiments with the \twodimlh environment are, principally, to highlight that increasing the imitation gap can have a substantial detrimental impact on the quality of policies learned by training IL. Because of this, we wish to give IL the greatest opportunity to succeed and thus we are not, as in our \pd/\minigrid experiments, attempting to understand its expected performance when we must search for good hyperparameters. To this end, we perform the following procedure for every $i,j\in\{1,3,5\ldots, 15\}$ with $i<j$.

For every learning rate $\lambda\in \{\text{100 values evenly spaced in $[10^{-4}, 1]$ on a log-scale}\}$ we train a $f^i$-restricted policy to imitate a $f^j$-optimal teacher using BC. For each such trained policy, we roll out trajectories from the policy across 200 randomly sampled episodes (in the \twodimlh there is no distinction between training, validation, and test episodes as there are only four unique initial world settings). For each rollout, we compute the average cross entropy between the learned policy and the expert's policy at every step. A ``best'' learning rate $\lambda^{i,j}$ is then chosen by selecting the learning rate resulting in the smallest cross entropy (after having smoothed the results with a locally-linear regression model \citep{AllOfNonparametricStatistics}). 

A final learning rate is then chosen as the average of the $\lambda^{i,j}$ and this learning rate is then used when training all methods to produce the plots in \figref{fig:lh-plots}. As some baselines require additional hyperparameter choices, these other hyperparameters were chosen heuristically (post-hoc experiments suggest that results for the other methods are fairly robust to these other hyperparameters). 

\noindent\textbf{\pd and \minigrid tasks: Random Hyperparameter Evaluations.} As described in the main paper, we follow the best practices suggested by~\citet{DodgeGCSS19}. In particular, for our \pd and \minigrid tasks we train each of our baselines when sampling that method's hyperparameters, see Table~\ref{tab:baseline_details} and recall Sec.~\ref{sec:additional-baseline-details}, at random 50 times. Our plots, \eg,  \figref{fig:minigrid-tasks}, then report an estimate of the expected (validation set) performance of each of our methods when choosing the best performing model from a fixed number of random hyperparameter evaluations. Unlike \citep{DodgeGCSS19}, we compute this estimate using a U-statistic \cite[Chapter~12]{vaart} which is unbiased. Shaded regions encapsulate the 25-to-75th quantiles of the bootstrap distribution of this statistic.

\noindent\textbf{\pnav, \onav, and \cnav tasks: use hyperparameters from in prior work.} Due to computational constraints, our strategy for choosing  hyperparameters for the \pnav, \onav, and \cnav tasks was simply to follow prior work whenever possible. Of course, there was no prior work suggesting good hyperparameter values for the $\alpha,\beta$ parameters in our new ADVISOR loss. Following the intuitions we gained from our the \twodimlh, \pd, and \minigrid experiments, we fixed $\alpha,\beta$ to (10, 0.1) for \pnav, $\alpha,\beta$ to (20, 0.1) for \onav, and $\alpha,\beta$ to (0.01, 0) for \cnav. For the \onav task, we experimented with setting $\beta=0$ and found that the change had essentially no impact on performance (validation-set SPL after $\approx100$Mn training steps actually improved slightly from .1482 to .1499 when setting $\beta=0$).

\subsection{Training Implementation}\label{sec:training-imp-details}

\begin{figure}[t]
    \centering
    {
      \phantomsubcaption\label{fig:lh-plots-ppo}
      \phantomsubcaption\label{fig:lh-plots-bc}
      \phantomsubcaption\label{fig:lh-plots-bcppo}
      \phantomsubcaption\label{fig:lh-plots-advisor}
    }
    \includegraphics[width=1\linewidth]{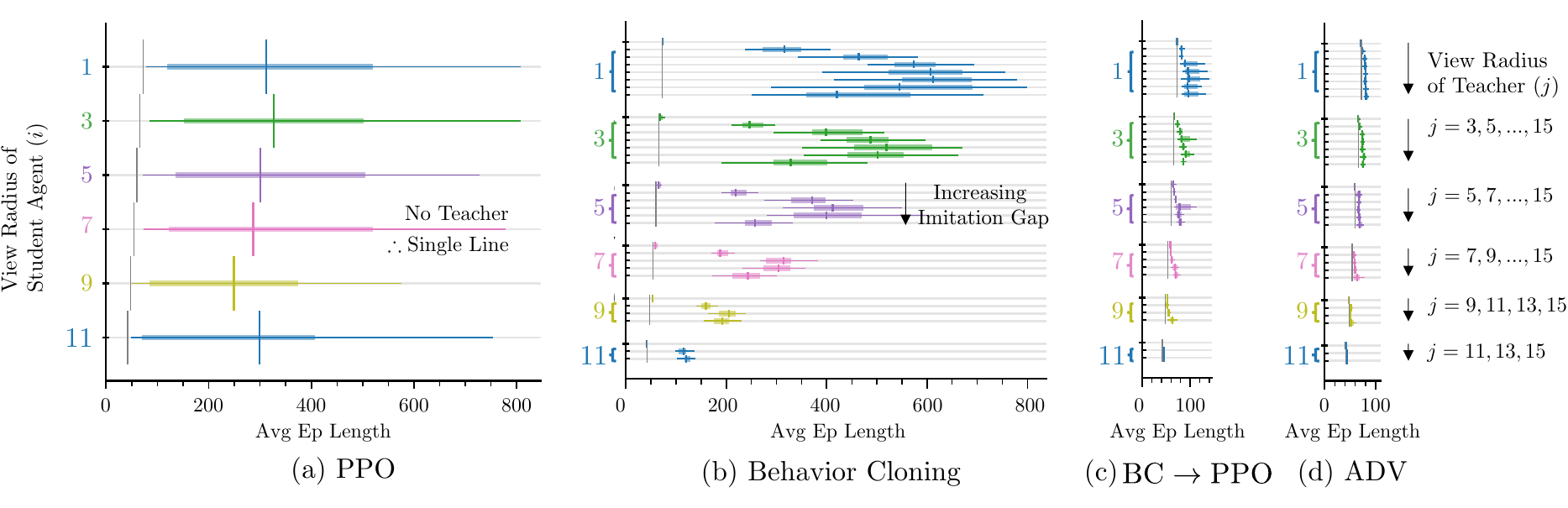}
    \caption{\textbf{Size of the imitation gap directly impacts performance.} Training $f^i$-restricted students with $f^j$-optimal teachers (in \twodimlh).}
    \label{fig:app-lh-plots}
\end{figure}

As discussed previously, for our \pnav, \onav, and \cnav experiments, we have used standard training implementation details (\eg reward structure) from prior work. Thus, in the below, we provide additional details only for the \twodimlh, \pd, and \minigrid tasks.

A summary of the training hyperparameters and their values is included in~\tabref{tab:hyperparams-grid}. Kindly see~\citep{schulman2017proximal} for details on PPO and~\citep{schulman2015high} for details on generalized advantage estimation (GAE). 

\noindent\textbf{Max.\ steps per episode.} The maximum number of steps allowed in the \twodimlh task is 1000. Within the \pd task, an agent can never take more than 11 steps in a single episode (1 action to select the door and then, at most, 10 more actions to input the combination if $d_1$ was selected) and thus we do not need to set a maximum number of allowed steps. The maximum steps allowed for an episode of \textsc{WC/LC} is set by~\citep{gym_minigrid,chevalier2018babyai} to $4 S^2$, where $S$ is the grid size. We share the same limits for the challenging variants -- \textsc{Switch} and \textsc{Corrupt}. Details of task variants, their grid size, and number of obstacles are included in~\secref{sec:additional-task-details}.

\noindent\textbf{Reward structure.} Within the \twodimlh task, the agent receives one of three possible rewards after every step: when the agent finds the goal it receives a reward of 0.99, if it otherwise has reached the maximum number of steps (1000) it receives a $-1$ reward, and otherwise, if neither of the prior cases hold, it obtains a reward of $-0.01$. See Sec.~\ref{sec:poisoned-doors-task-description-full} for a description of rewards for the \pd task. For \textsc{WC/LC},~\citep{gym_minigrid,chevalier2018babyai} configure the environment to give a $0$ reward unless the goal is reached. If the goal is reached, the reward is $1-\frac{\text{episode length}}{\text{maximum steps}}$. We adopt the same reward structure for our \textsc{Switch} and \textsc{Corrupt} variants as well.

\noindent\textbf{Computing infrastructure.} As mentioned in~\secref{sec:evaluation}, for all tasks (except \textsc{LH}) we train $50$ models (with randomly sampled hps) for each baseline. This amounts to $750$ models per task or $6700$ models in total. For each task, we utilize a \texttt{g4dn.12xlarge} instance on AWS consisting of $4$ NVIDIA T4 GPUs and $48$ CPUs. We run through a queue of $750$ models using $\approx 40$ processes. For tasks set in the \minigrid environments, models each require $\approx 1.2$ GB GPU memory and all training completes in $18$ to $36$ hours. For the \pd task, model memory footprints are smaller and training all models is significantly faster ($<8$ hours).

\begin{figure}[t]
    \centering
    {
        \phantomsubcaption\label{fig:lc-fs-supp}
        \phantomsubcaption\label{fig:wc-supp}
        \phantomsubcaption\label{fig:wc-os-supp}
        \phantomsubcaption\label{fig:wc-fs-supp}
        \phantomsubcaption\label{fig:wc-ce-supp}
        \phantomsubcaption\label{fig:lc-fs-robust-supp}
        \phantomsubcaption\label{fig:wc-robust-supp}
        \phantomsubcaption\label{fig:wc-os-robust-supp}
        \phantomsubcaption\label{fig:wc-fs-robust-supp}
        \phantomsubcaption\label{fig:wc-ce-robust-supp}
    }
    \includegraphics[width=\linewidth]{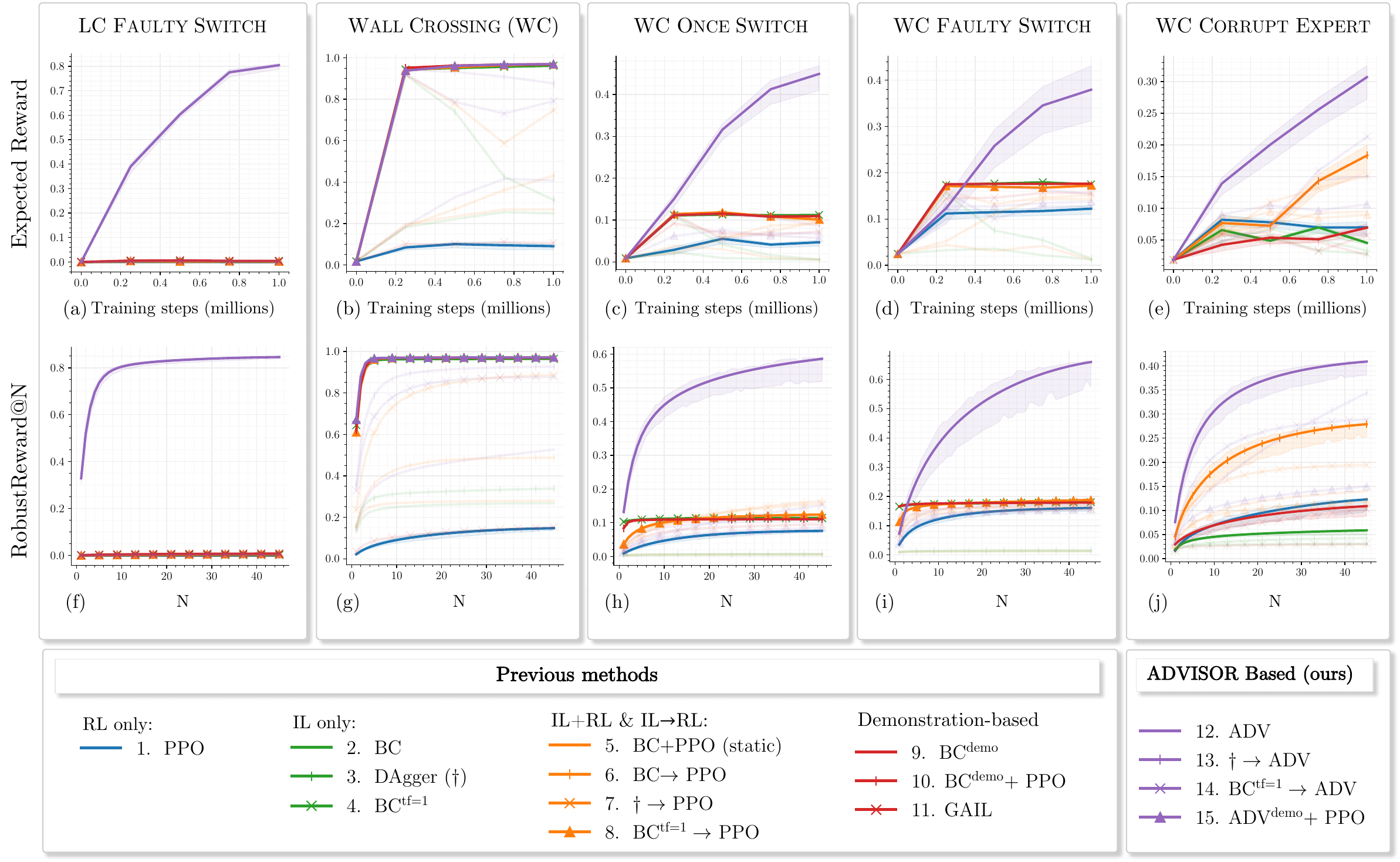}
    \caption{\textbf{Additional results for \minigrid tasks.} Here we include the results on the \minigrid tasks missing from \Cref{fig:minigrid-tasks}.}
    \label{fig:minigrid-tasks-supp}
\end{figure}

\begin{figure*}[t]
    \centering
    \begin{tabular}{cc}
        \includegraphics[trim=5 5 10 10,clip,width=0.3\textwidth]{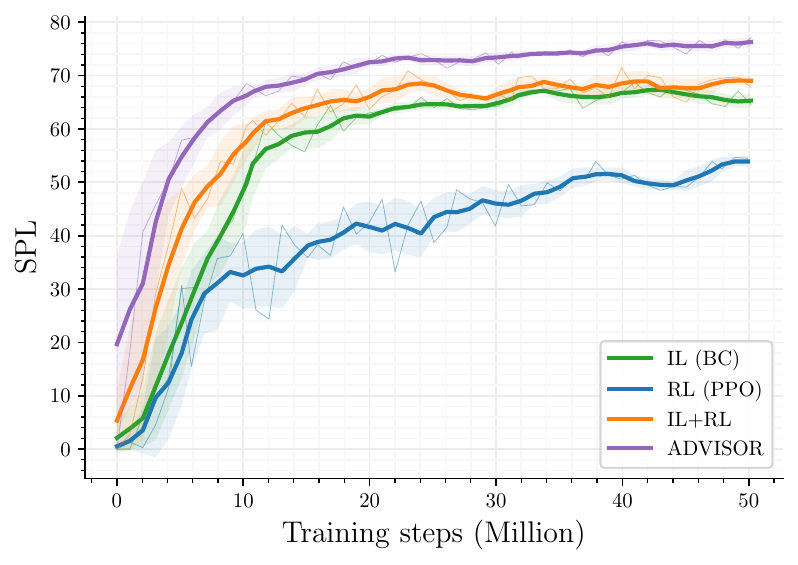} &
        \includegraphics[trim=5 5 10 10,clip,width=0.3\textwidth]{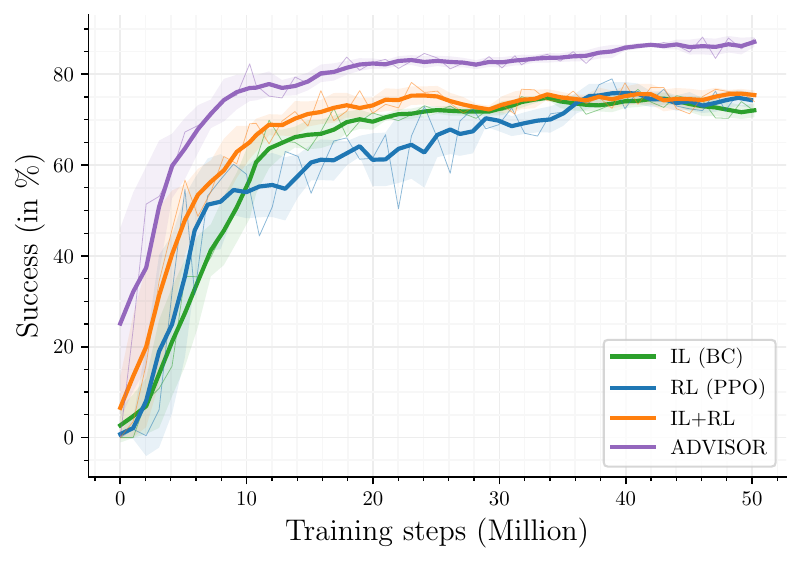}\\
    \end{tabular}
    \caption{\textbf{Learning curves for \pnav.} SPL (scaled by 100) and success rate (in \%) are plotted \vs training steps, following the standard protocols. We evaluate checkpoints after every $1024$k frames of experience. This is plotted as the thin line. The thick line and shading depicts the rolling mean (with a window size of $2$) and corresponding standard deviation.
    }
    \label{fig:pointgoal-plots}
\end{figure*}

\begin{figure*}[t]
    \centering
    {
    \phantomsubcaption\label{fig:cnav-plots-range0.8}
    \phantomsubcaption\label{fig:cnav-plots-range1.2}
    \phantomsubcaption\label{fig:cnav-plots-range1.6}
    \phantomsubcaption\label{fig:cnav-plots-range2.0}
    }
    \includegraphics[width=\textwidth]{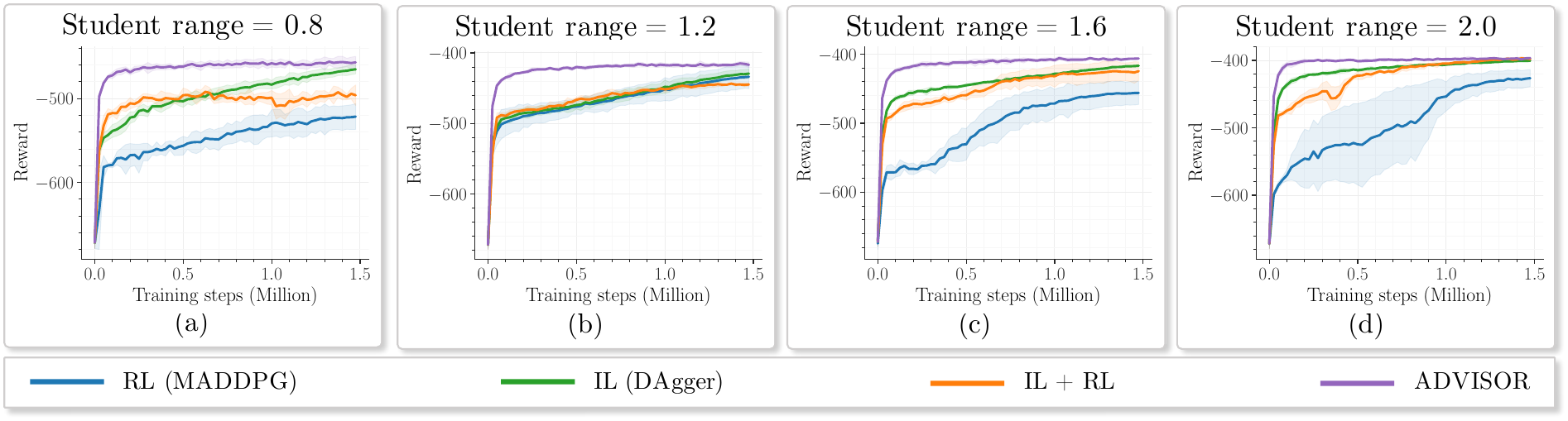}
    \caption{\textbf{Learning curves for \cnav.} Rewards are plotted \vs training steps, following the standard protocols. For a full-range teacher, we train students with different (and limited) visibility range of $0.8, 1.2, 1.8$, and $2.0$. The networks are initialized with four different seeds and the mean and standard deviation are plotted. Checkpoints are evaluated at every 25k steps.
    }
    \label{fig:cnav-plots}
\end{figure*}

\subsection{Additional results}\label{sec:more-plots}

Here we record additional results that were summarized or deferred in \Cref{sec:results}. In particular,

\begin{itemize}
    \item \Cref{fig:app-lh-plots} complements \Cref{fig:lh-plots} from the main paper and provides results for additional baselines on the \twodimlh task. Notice that both the pipelined IL$\to$PPO and ADVISOR methods greatly reduce the impact of the imitation gap (\Cref{fig:lh-plots-bcppo,fig:lh-plots-advisor} versus \Cref{fig:lh-plots-bc}) but our ADVISOR method is considerably more effective in doing so (\Cref{fig:lh-plots-bcppo} v.s. \Cref{fig:lh-plots-advisor}).
    
    \item \Cref{fig:minigrid-tasks-supp} shows the results on our remaining \minigrid tasks missing from \Cref{fig:minigrid-tasks}. Notice that the trends here are very similar to those from \Cref{fig:minigrid-tasks}, ADVISOR-based methods have similar or better performance than our other baselines.
    
    \item \Cref{tab:pd-and-mingrid-results-all} shows an extended version of \Cref{tab:pd-and-mingrid-results} where, rather than grouping methods together, we display results for each method individually.
    
    \item \Cref{fig:pointgoal-plots} displays validation set performance of our \pnav baselines over training. Note that static combination of RL and IL losses improves individual RL/IL baselines. Our adaptive combination of these losses (ADVISOR) outperforms these baselines and is more sample efficient.
    
    \item \Cref{fig:cnav-plots} lays out the performance of agents on the \cnav task. In the main paper we include results for the limited visibility range of 1.6 for the student. Here, we include results for four visibility range. RL only baseline is least sample-efficient. Overall, we find ADVISOR is significantly more sample efficient -- most of the learning is completed in just $0.2$ million steps while the other baselines take over $1.5$ million steps.
\end{itemize}

\begin{table}[t]
    \centering
    \scriptsize
    \setlength{\tabcolsep}{10pt}
    \begin{tabular}{ll}
        \toprule
        \textbf{Hyperparamter} &  \textbf{Value}\\\hline
        \multicolumn{2}{c}{\textit{Structural}}\\\hline
        Cell type embedding length & $8$\\
        Cell color embedding length & $8$\\
        Cell state embedding length & $8$\\
        RNN type & LSTM\\
        RNN layers & $1$\\
        RNN hidden size & $128$\\
        \# Layers in critic & $1$\\
        \# Layers in actor & $1$\\
        \hline\multicolumn{2}{c}{\textit{PPO}}\\\hline
        Clip parameter ($\epsilon$)~\citep{schulman2017proximal} & $0.1$\\
        Decay on $\epsilon$ & $\mathtt{Linear}(1,0)$\\
        Value loss coefficient & $0.5$\\
        Discount factor ($\gamma$) & $0.99$\\
        GAE parameter ($\lambda$) & $1.0$\\
        \hline\multicolumn{2}{c}{\textit{Training}}\\\hline
        Rollout timesteps & $100$\\
        Rollouts per batch & 10 \\
        \# processes sampling rollouts & 20  \\
        Epochs & $4$\\
        Optimizer & Adam~\citep{KingmaICLR2015adam}\\
        $(\beta_{1}, \beta_{2})$ for Adam & $(0.9, 0.999)$\\
        Learning rate & \texttt{searched}\\
        Gradient clip norm & $0.5$ \\
        Training steps (\textsc{WC/LC} \& variants) & $1\cdot10^{6}$\\
        Training steps (\twodimlh \& \pd) & $3\cdot10^{5}$\\
        \bottomrule
    \end{tabular}
    \caption{Structural and training hyperparameters for \twodimlh, \pd, and \minigrid tasks.} \label{tab:hyperparams-grid}
\end{table}

\begin{table}[t]
    \centering
    \scriptsize
    \setlength{\tabcolsep}{10pt}
    \begin{tabular}{lcc}
        \toprule
        \textbf{Hyperparamter} &  \pnav & \onav\\\hline
        \multicolumn{3}{c}{\textit{Structural}}\\\hline
        RNN type & \multicolumn{2}{c}{GRU}\\
        RNN layers & \multicolumn{2}{c}{$1$}\\
        RNN hidden size & \multicolumn{2}{c}{512}\\
        \# Layers in critic & \multicolumn{2}{c}{$1$}\\
        \# Layers in actor & \multicolumn{2}{c}{$1$}\\
        \hline\multicolumn{3}{c}{\textit{PPO}}\\\hline
        Clip parameter ($\epsilon$)~\citep{schulman2017proximal} & \multicolumn{2}{c}{$0.1$}\\
        Decay on $\epsilon$ & \multicolumn{2}{c}{None}\\
        Value loss coefficient & \multicolumn{2}{c}{$0.5$}\\
        Discount factor ($\gamma$) & \multicolumn{2}{c}{$0.99$}\\
        GAE parameter ($\lambda$) & \multicolumn{2}{c}{$0.95$}\\
        \hline\multicolumn{3}{c}{\textit{Training}}\\\hline
        Rollout timesteps & \multicolumn{2}{c}{128}\\
        Rollouts per batch & 60 & 8 \\
        \# processes sampling rollouts & 60 & 16 \\
        Epochs & \multicolumn{2}{c}{$4$}\\
        Optimizer & \multicolumn{2}{c}{Adam~\citep{KingmaICLR2015adam}}\\
        $(\beta_{1}, \beta_{2})$ for Adam & \multicolumn{2}{c}{$(0.9, 0.999)$}\\
        Learning rate & $3\cdot 10^{-4}$ & $2.5 \cdot 10^{-4}$\\
        Gradient clip norm & $0.5$ & $0.1$ \\
        Training steps & $100\cdot10^{6}$ & $50\cdot10^{6}$\\
        \bottomrule
    \end{tabular}
    \caption{Structural and training hyperparameters for \pnav and \onav.} \label{tab:hyperparams-onav}
\end{table}

\begin{table}[t]
\centering
\resizebox{\textwidth}{!}{%
\begin{tabular}{l>{\columncolor{ColorPNav}}c>{\columncolor{ColorFurnMove}}c>{\columncolor{ColorFurnMove}}c>{\columncolor{ColorFurnMove}}c>{\columncolor{ColorFurnMove}}c>{\columncolor{ColorFootball}}c>{\columncolor{ColorFootball}}c>{\columncolor{ColorFootball}}c>{\columncolor{ColorFootball}}c>{\columncolor{ColorFootball}}c>{\columncolor{ColorFootball}}cc}
\toprule
\multicolumn{1}{c}{Tasks $\rightarrow$} & \textbf{\textsc{PD}} & \multicolumn{4}{c}{\cellcolor{ColorFurnMove}\textbf{\textsc{LavaCrossing}}} & \multicolumn{4}{c}{\cellcolor{ColorFootball}\textbf{\textsc{WallCrossing}}}  \\
 Training routines $\downarrow$  & - & Base Ver. & Corrupt Exp. & Faulty Switch & Once Switch & Base Ver. & Corrupt Exp. & Faulty Switch & Once Switch  \\ \midrule
PPO & 0 & 0 & 0 & 0.01 & 0 & 0.09 & 0.07 & 0.12 & 0.05 \\
BC & -0.6 & 0.1 & 0.02 & 0 & 0 & 0.25 & 0.05 & 0.01 & 0.01 \\
DAgger $(\dagger)$ & -0.59 & 0.14 & 0.02 & 0 & 0 & 0.31 & 0.03 & 0.01 & 0.01 \\
BC$^{\text{tf}=1}$ & -0.62 & 0.88 & 0.02 & 0.02 & 0 & 0.96 & 0.03 & 0.17 & 0.11 \\
BC$+$PPO (static) & -0.59 & 0.12 & 0.08 & 0 & 0 & 0.27 & 0.09 & 0.01 & 0 \\
BC$ \to$ PPO & -0.17 & 0.15 & 0.32 & 0.02 & 0 & 0.43 & 0.18 & 0.14 & 0.09 \\
$\dagger \to$ PPO & -0.45 & 0.32 & 0.61 & 0.02 & 0 & 0.75 & 0.15 & 0.15 & 0.1 \\
BC$^{\text{tf}=1} \to$ PPO & -0.5 & 0.94 & 0.74 & 0.04 & 0 & \textbf{0.97} & 0.09 & 0.17 & 0.1 \\
BC$^{\text{demo}}$ & -0.62 & 0.88 & 0.02 & 0.02 & 0 & 0.96 & 0.07 & 0.18 & 0.11 \\
BC$^{\text{demo}} +$ PPO & -0.64 & \textbf{0.96} & 0.2 & 0.02 & 0 & \textbf{0.97} & 0.03 & 0.17 & 0.11 \\
GAIL & -0.09 & 0 & 0 & 0.02 & 0 & 0.11 & 0.06 & 0.16 & 0.07 \\
ADV & \textbf{1} & 0.18 & 0.8 & \textbf{0.77} & \textbf{0.8} & 0.41 & \textbf{0.31} & \textbf{0.38} & \textbf{0.45} \\
BC$^{\text{tf}=1} \to$ ADV & -0.13 & 0.55 & 0.83 & 0.02 & 0 & 0.88 & 0.15 & 0.15 & 0.09 \\
$\dagger \to$ ADV & -0.1 & 0.47 & 0.73 & 0.01 & 0 & 0.79 & 0.21 & 0.13 & 0.07 \\
ADV$^{\text{demo}} +$ PPO & 0 & \textbf{0.96} & \textbf{0.94} & 0.03 & 0 & \textbf{0.97} & 0.11 & 0.14 & 0.06 \\ \bottomrule
\end{tabular}
}%
\caption{\textbf{Expected rewards for the \textsc{PoisonedDoors} task and \textsc{MiniGrid} tasks.} Here we show an expanded version of Table \ref{tab:pd-and-mingrid-results-all} where results for all methods rather than grouped methods. For each of our 15 training routines we report the expected maximum validation set performance (when given a budget of 10 random hyperparameter evaluations) after training for $\approx$300k steps in the \textsc{PoisonedDoors} environment and $\approx$1Mn steps in our 8 \minigrid tasks. The maximum possible reward is 1 for the \minigrid tasks.} \label{tab:pd-and-mingrid-results-all}
\end{table}

\end{document}